\newtheorem{theorem}{Theorem}
\newtheorem{proposition}{Proposition}
\newtheorem{lemma}{Lemma}
\newtheorem{remark}{Remark}
\newenvironment{psmallmatrix}{\left(\begin{smallmatrix}}{\end{smallmatrix}\right)}
\newcommand{\R}{\mathbb{R}}
\newlength{\oldtextfloatsep}\setlength{\oldtextfloatsep}{\textfloatsep}
\newlength\mylen
\begin{document}

\title{Learnable Similarity and Dissimilarity Guided Symmetric Non-Negative Matrix Factorization}
\author{Wenlong Lyu, Yuheng Jia %
\thanks{Wenlong Lyu and Yuheng Jia are with the School of Computer Science and Engineering, Southeast University, Nanjing 211189, Jiangsu (email: l\_w\_l@seu.edu.cn; yhjia@seu.edu.cn), (\emph{Corresponding Author: Yuheng Jia})} %
}

\maketitle

\begin{abstract}
    Symmetric nonnegative matrix factorization (SymNMF) is a powerful tool for clustering, which typically uses the $k$-nearest neighbor ($k$-NN) method to construct similarity matrix.
    However, $k$-NN may mislead clustering since the neighbors may belong to different clusters, and its reliability generally decreases as $k$ grows. 
    In this paper, we construct the similarity matrix as a weighted $k$-NN graph with learnable weight that reflects the reliability of each $k$-th NN.
    This approach reduces the search space of the similarity matrix learning to $n - 1$ dimension, as opposed to the $\mathcal{O}(n^2)$ dimension of existing methods, where $n$ represents the number of samples.
    Moreover, to obtain a discriminative similarity matrix, we introduce a dissimilarity matrix with a dual structure of the similarity matrix, and propose a new form of orthogonality regularization with discussions on its geometric interpretation and numerical stability.
    An efficient alternative optimization algorithm is designed to solve the proposed model, with theoretically guarantee that the variables converge to a stationary point that satisfies the KKT conditions.
    The advantage of the proposed model is demonstrated by the comparison with nine state-of-the-art clustering methods on eight datasets.
    The code is available at \url{https://github.com/lwl-learning/LSDGSymNMF}.
\end{abstract}

\begin{IEEEkeywords}
    Adaptive similarity, symmetric nonnegative matrix factorization, orthogonality regularization, clustering
\end{IEEEkeywords}

\section{Introduction}
\IEEEPARstart{N}{onegative} matrix factorization (NMF) \cite{Lee1999LearningTP, Lee2000AlgorithmsFN} is a well-known dimensionality reduction method, which decomposes a non-negative data matrix $X \in \R_{+}^{m \times n}$ into two non-negative low-rank matrices $U \in \R_{+}^{m \times r}$ and $V \in \R_{+}^{n \times r}$ by solving the following problem:
\begin{equation}
    \min_{U, V \geq 0} \|X - U V^T \|_F^2,
\end{equation}
where $m$ and $n$ are the dimension and the number of data points, respectively, $r$ is the user-specified factorization rank, and $\|X \|_F = \sqrt{\sum_{i, j} x_{ij}^2}$ is the Frobenius norm of a matrix.
NMF has been successfully applied in document clustering \cite{Shahnaz2006DocumentCU, Yang2005DocumentCB}, facial feature extraction \cite{Lee1999LearningTP} and community detection \cite{Wu2018NonnegativeMF}.
For a comprehensive review of NMF, see \cite{Wang2013NonnegativeMF}.

Symmetric NMF (SymNMF) is a variant of NMF, which factors a pairwise similarity matrix $S \in \R^{n \times n}$ into the form $VV^T$ for $V \in \R^{n \times r}$, i.e.,
\begin{equation}
    \min_{V \geq 0} \|S - VV^T \|_F^2,
    \label{eq:SymNMF_intro}
\end{equation}
where $r$ is the predefined number of clustering classes.
SymNMF can be viewed as a relaxation of $k$-means clustering and spectral clustering \cite{Kuang2015SymNMF, Ding2005Onthe}, and has the ability of clustering linearly non-separable data \cite{Long2007RelationalCB, Kuang2012SymmetricNM, Wu2018PairwiseCP}.
In SymNMF, $S$ is usually constructed by $k$-nearest neighbor ($k$-NN) graph as follows:
\begin{equation}
s_{ij} = \begin{cases}
\kappa(x_i, x_j), & \text{if} \; x_j \; \text{is a} \; k \text{-NN of} \; x_i \\
0, & \text{otherwise}
\end{cases},
\label{eq:S_knn}
\end{equation}
where $x_i \in \R^{m}$ denotes the $i$-th data point, $\kappa(x_i, x_j)$ is a kernel function, e.g., $\kappa(x_i, x_j) = \exp(-\|x_i - x_j \|_2^2 / \sigma^2)$ for the Gaussian kernel.
The $k$-NN graph $S$ can be expressed as the sum of the first $k$-NN slices, i.e., $S = \sum_{i=1}^{k} A^{(i)}$, where the $k$-th NN slice $A^{(k)} \in \R^{n \times n}$ is defined as:
\begin{equation}
a_{ij}^{(k)} = \begin{cases}
\kappa(x_i, x_j), & \text{if} \; x_j \; \text{is a} \; k \text{-th NN of} \; x_i \\
0, & \text{otherwise}
\end{cases}.
\end{equation}
Each $A^{(k)}, k = 1, \dots, n$ contains $n$ neighbor relations.
We name the proportion of the correct neighbor relations as the correct rate, where the correct neighbor relation means two neighboring samples $x_i$ and $x_j$ are with same ground-truth class.

\begin{figure}[t]
    \centering
    \subfloat[]{\includegraphics[width=1.65in]{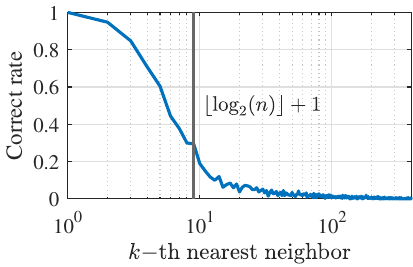} %
        \label{correct_k}}
    \subfloat[]{\includegraphics[width=1.65in]{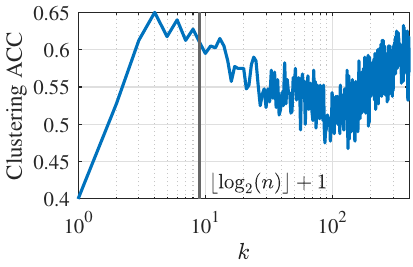} %
        \label{ACC_k}}
    \caption{(a) Correct rate of each $k$-th NN slice $A^{(k)}$.
    (b) Clustering ACC trained by standard SymNMF \cite{Kuang2015SymNMF} on the ORL dataset with respect to $k$, where the kernel function $\kappa(x_i, x_j)$ is defined by the self-tuning method \cite{ZelnikManor2004SelfTuningSC}. 
    }
\end{figure}

By observating the correct rate of each $A^{(k)}$ on the ORL dataset, which is shown in Fig. \ref{correct_k}, we found that only the first few $A^{(k)}$s have high ACC, and the rest contain a lot of error relations.
As a result, the clustering may be misled when the parameter $k$ in \eqref{eq:S_knn} is large. 
For example, we trained the standard SymNMF \cite{Kuang2015SymNMF} on the ORL dataset, and the clustering ACC with respect to $k$ is shown in Fig. \ref{ACC_k}.
It can be seen that the value of $k$ has a significant impact on ACC, and high ACC usually occurs when $k$ is small.
For this reason, Luxburg \cite{Luxburg2004ATO} suggested choosing $k = \lfloor \log_2(n) \rfloor + 1$, which is widely used in SymNMF \cite{Kuang2015SymNMF, Wu2018PairwiseCP, Jia2021SemiAS}.
However, this simple strategy loses a lot of similarity information.

In this paper, we propose a more general and powerful approach to fully utilize the similarity information and reduce the impact of misleading relations on clustering, which is based on the following assumption:

\noindent \textbf{Assumption 1: the $k$-th NN slice with a higher correct rate should be considered more important because they are more reliable.}

To model this importance, we construct $S$ as a weighted $k$-NN graph as follows:
\begin{equation}
    S(w) = \sum_{k=1}^{n} w_k A^{(k)}, \quad \text{where} \quad w \geq 0, w^T 1_n = 1.
    \label{eq:S_w_intro}
\end{equation}
In \eqref{eq:S_w_intro}, $1_n$ is an $n$-dimensional vector full of $1$s, $w_k$ is a learnable non-negative coefficient reflecting the importance of $A^{(k)}$.
The larger $w_k$ is, the more important the $k$-th NN is, and $w_k=0$ means that the $k$-th NN is unselected.

A similar idea was proposed in \cite{nie2014clustering}, which learns an adaptive $k$ neighbors graph (not necessarily $k$-nearest neighbors) by exploring the local connectivity of data.
Huang et. al. \cite{Huang2017NonnegativeMF} further developed this approach in NMF, named NMFAN, whose neighbors are adaptively learned by combining local connectivity in both data and feature.
Experiments have shown that this method is more competitive than $k$-NN \cite{nie2014clustering, Huang2017NonnegativeMF}.

Another approach is to adaptively learn a better graph $S$ from an initial low-quality graph $S_0$ (e.g. $k$-NN graph).
Typically, $S$ is assumed to have some desirable properties, which are described by some constraints or regularizations.
For example, bi-stochastic structure \cite{Wang2022RobustBS}, self-expressive property \cite{Jia2020ClusteringAG}, and consistent with the available supervisory information \cite{Wu2018PairwiseCP, Jia2021SemisupervisedAS, Jia2020PairwiseCP, Jia2024SemisupervisedSM}.
In this approach, the learned $S$ heavily relies on the initial $S_0$.
Please see the detailed discussions in Section \ref{sec:2_2}.

A common \textbf{challenge} of the above two approaches is the high dimensionality of search space of $S$, i.e., $n^2$, because every $s_{ij}$ is a free variable. 
Even though several constrains, such as $S = S^T, \mathrm{diag}(S) = 0, S 1_n = 1_n$ can reduce the freedom of the the variables, it still remains an $\mathcal{O}(n^2)$ dimension space.
Therefore, it is difficult to learn a high-quality similarity matrix in such a high dimensional space.

Differently, in our approach, we construct $S(w)$ in an $n - 1$ dimensional space spanned by each $A^{(k)}, k = 1, \dots, n$.
This low dimensionality makes it much easier to learn the optimal parameter $w^{\ast}$.
Since the underlying optimal $w^{\ast}$ is unknown, we must capture the properties that $w^{\ast}$ has to reduce the search space.
Based on the Assumption 1, $w^{\ast}$ should consistent with the correct rate curve shown in Fig. \ref{correct_k}.
To validate this, we test a simple model that combines \eqref{eq:S_w_intro} with \eqref{eq:SymNMF_intro} as follows:
\begin{equation}
    \min_{V, w \geq 0} \frac{1}{2} \|S(w) - VV^T \|_F^2 + \frac{\mu}{2} \|S(w) \|_F^2, \; \text{s.t.} \; w^T 1_n = 1,
    \label{eq:toy_model}
\end{equation}
where $\mu$ is a hyper-parameter that controls the density of $S(w)$.
With proper $\mu$, we trained \eqref{eq:toy_model} on the ORL dataset, the result in shown in Fig. \ref{fig:toy_model_res}.
\begin{figure}[t]
    \centering
    \includegraphics[width=0.8\linewidth]{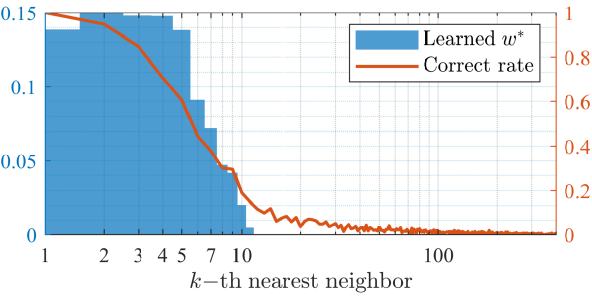}
    \caption{The learned $w^{\ast}$ in \eqref{eq:toy_model} (left $y$-axis) and the correct rate (right $y$-axis). It can be seen that $w^{\ast}$ is sparse and consistent with correct rate.}
    \label{fig:toy_model_res}
\end{figure}
It can be seen that $w^{\ast}$ is consistent with correct rate.
More detailed results on \eqref{eq:toy_model} can be found in Section \ref{sec:ablation}.

To further improve the performance of \eqref{eq:toy_model}, we assume that $w^{\ast}$ should satisfy another property, captured by the following assumption:

\noindent\textbf{Assumption 2: The ideal $S(w^{\ast})$ should correspond to a discriminative clustering result.}

In SymNMF \eqref{eq:SymNMF_intro} and its variant \eqref{eq:toy_model}, the low-rank matrix $V$ reflects the clustering result.
Accordingly, $VV^T$ can be seen as a similarity matrix.
The larger $(VV^T)_{ij}$ is, the more likely $x_i$ and $x_j$ are belonging to the same class.
However, as shown in Fig. \ref{fig:toy_model_res}, $w^{\ast}$ (and $S(w^{\ast})$) is highly sparse.
Does $s_{ij}(w) = 0$ indicate that we consider $x_i$ and $x_j$ to be dissimilar, or we just not know their similarity?
For the former, $(VV^T)_{ij}$ should be near zero; for the latter, $(VV^T)_{ij}$ should not be affected.
However, the model \eqref{eq:toy_model} cannot distinguish between these two cases.

To address this problem, several works \cite{ Wu2021PositiveAN, Jia2021SemiAS} introduce a dissimilarity matrix $D$ that with dual structure of $S$.
When $s_{ij} = 0$, $d_{ij} > 0$ and $d_{ij} = 0$ can distinguish the above two cases.
By simultaneously learning $S$ and $D$, the discriminative of $S$ is improved.
One limitation of this approach is that additional supervision information is required to construct $D$.

In this paper, we notice that many of the later $A^{(k)}$s in Fig. \ref{fig:toy_model_res} have low  correct rate, which can well reflect the dissimilarity information.
Inspired by this observation, we think $D$ can be constructed using a $k$-farthest neighbor graph.
Similar to $S$, we construct $D$ as a weighted $k$-NN graph with another coefficient vector $p \in \R^{n}$, i.e.:
\begin{equation}
    D(p) = \sum_{k=1}^{n} p_k A^{(k)}, \quad \text{where} \quad p \geq 0, p^T 1_n = 1.
    \label{eq:D_p_intro}
\end{equation}

The discriminability of clustering results is also closely related to the orthogonality of $V$.
$V$ can be seen as a soft clustering result, i.e., in $i$-th row of $V$, the column index corresponding to maximum element is the class index of $x_i$.
When $V$ is column orthogonal, i.e., $\forall i \ne j, v_i^T v_j = 0$, each row of $V$ has only one non-zero element, which corresponds to hard clustering result.
Therefore, the orthogonality constraint of $V$ is introduced in \cite{Paul2016OrthogonalSN}.
Moreover, orthogonal NMF (ONMF) has been proven to be equivalent to weighted spherical $k$-means clustering \cite{Pompili2014TwoAF}.
Inspired by this interpretability, many ONMF methods \cite{Pompili2014TwoAF, Ding2006OrthogonalNM, Wang2021ClusteringBO} have been proposed to make $V$ more discriminative.
The key \textbf{challenge} in these methods is dealing with the non-convex orthogonality constraint $V^T V = I$.
To this end, considering that $v_i^T v_i = 1$ is not essential, Li et. al \cite{Li2015TwoEA} modified the constraint to a regularization $\sum_{i=1}^{r} \sum_{j\ne i} v_i^T v_j$.
However, this formulation may lead to incorrect clustering when a column in $V$ is all zeros, resulting the number of obtained clusters smaller than $r$. 
See the discussion in \ref{sec:2_3} for details.
The negative log-determinant function can also be used as an orthogonality regularization \cite{Liu2017LargeCN}, but its optimization algorithm have no theoretical guarantee of convergence.

In this paper, a new form of orthogonality regularization is proposed.
By column-wisely updating, theoretical convergence is proven under the progressive hierarchical alternating least squares (PHALS) \cite{Hou2023APrgressiveHA} framework.

The contributions of this paper are three-folds:
\begin{enumerate}
    \item A new approach for similarity and dissimilarity learning is proposed, which can be seen as a weighted $k$-NN graph.
    Under this approach, the similarity relations are fully utilized, and the dimension of the search space is reduced from $\mathcal{O}(n^2)$ to $n - 1$.
    \item A new form of orthogonality regularization is proposed, with discussions on its geometric interpretation and numerical stability.
    By column-wisely updating, theoretical convergence is proven under the PHALS \cite{Hou2023APrgressiveHA} framework.
    \item An alternative optimization algorithm is designed to solve the proposed model with $\mathcal{O}(n^2 r)$ complexity per iteration. 
    With theoretical proof, the variables converge to a stationary point that satisfied the KKT conditions.
\end{enumerate}

\section{Related Work} \label{sec:2}
\subsection{Adaptive Similarity Methods} \label{sec:2_2}
To adaptively learn a better graph from an initial low-quality graph, Jia et. al \cite{Jia2020ClusteringAG} proposed a self-expressive aware graph construction model as follows:
\begin{equation}
\begin{aligned}
    &\min_{S, V \geq 0} \|S - VV^T \|_F^2 + \alpha \|X - XS \|_F^2 + \beta \|S - S_0 \|_F^2 \\
    &\text{s.t.}, \quad \mathrm{diag}(S) = 0,
\end{aligned}
\end{equation}
where $S_0$ is the initial graph constructed by \eqref{eq:S_knn}, $\mathrm{diag}(S) = 0$ means that the diagonal elements of $S$ are all zeros.

By leveraging ensemble clustering, S$^3$NMF \cite{Jia2022S3NMF} was proposed to boost clustering performance progressively, which is formulated as:
\begin{equation}
    \min_{S, V_i, \alpha \geq 0} \sum_{i=1}^{c} \alpha_i^{\tau} \|S - V_i V_i^T \|_F^2, \quad  \text{s.t.}, \quad \alpha^T 1_n = 1,
\end{equation}
where $V_i \in \R^{n \times r}$ is the $i$-th factor matrix, $\alpha$ is the weigh vector balancing the contribution of each $V_i$, $\tau$ is the paramater that controls the distribution of $\alpha$ (sharp or uniform).

Moreover, by utilizing supervisory information, Jia et. al \cite{Jia2021SemiAS} induced a dissimilarity matrix to enhance similarity learning, which is formulated as:
\begin{equation}
\begin{aligned}
    &\min_{S, D, V \geq 0} \|S - VV^T \|_F^2 + \eta \langle S, D \rangle \\
    &\qquad \qquad + \alpha \mathrm{Tr}(D L_{S_0} D^T) + \beta \mathrm{Tr}(S L_{S_0} S^T) \\
    & \text{s.t.}, \quad s_{ij}=1, d_{ij}=0, \; \forall (i, j) \in \mathcal{M}, \\
    &\qquad \;\;\, s_{ij}=0, d_{ij}=1, \; \forall (i, j) \in \mathcal{C},
\end{aligned}
\end{equation}
where $L_{S_0} = \mathrm{Diag}(S_0 1_n) - S_0$ is the graph laplacian of $S_0$, $\mathrm{Diag}(x)$ is a diagonal matrix with $x$ as its diagonal elements.
$\mathcal{M}$ and $\mathcal{C}$ are must-link set and cannot-link set, respectively, provided by supervisory information.

In this approach, the learned $S$ heavily relies on the initial $S_0$, because $S_0$ either acts as the initialization of $S$ \cite{Jia2022S3NMF} or been incorporated by a regularization like $\|S - S_0 \|_F^2$ \cite{Jia2020ClusteringAG} and $\mathrm{Tr}(S L_{S_0} S^T)$ \cite{Jia2021SemiAS}.
Conversely, NMFAN \cite{Huang2017NonnegativeMF} learns an adaptive $k$ neighbors (not necessarily $k$-nearest neighbors) by combining local connectivity in both data and feature spaces, which is formulated as:
\begin{equation}
\begin{aligned}
    &\min_{S, U, V \geq 0} \|X - UV^T \|_F^2 + \lambda \mathrm{Tr}(V^T L_S V) \\
    &\qquad \qquad + \mu \left( \mathrm{Tr}(X^T L_S X) + \gamma \|S \|_F^2 \right) \\
    &\text{s.t.}, \quad S^T 1_n = 1_n.
\end{aligned}
\end{equation}

A common challenge of the above methods is the $\mathcal{O}(n^2)$ dimension of search space of $S$, because every $s_{ij}$ is a free variable. 
Therefore, it is difficult to learn a high-quality similarity matrix in such a high dimensional space.
As a comparison, we construct $S$ in an $n - 1$ dimensional space spanned by each $k$-th NN slice $A^{(k)}$, which makes it easier to obtain the optimal similarity in a low-dimensional space.

\subsection{Orthogonality Regularization} \label{sec:2_3}
The orthogonality constraint of $V$ is introduced in \cite{Paul2016OrthogonalSN} to enhence the discriminability of clustering result.
Additionally, orthogonal NMF has been proven to be equivalent to weighted spherical $k$-means clustering \cite{Pompili2014TwoAF}.
Inspired by this interpretability, some orthogonality reglarizations are proposed to make $V$ more discriminative.
For example, the approximately orthogonal NMF \cite{Li2015TwoEA} is formulated as
\begin{equation}
\begin{aligned}
    &\min_{U, V \geq 0} \tfrac{1}{2} \|X - UV^T \|_F^2 - \tfrac{\lambda}{2} \mathcal{R}_{\text{off-diag}}(V) \\
    & \textstyle \text{where} \quad \mathcal{R}_{\text{off-diag}}(V) = -\sum_{i=1}^{r} \sum_{j \ne i} v_i^T v_j.
    \label{eq:AONMF}
\end{aligned}
\end{equation}
The $\mathcal{R}_{\text{off-diag}}(V)$ is maximized when $V$ is column-orthogonal, i.e., $\forall i \ne j, v_i^T v_j = 0$.
Additionally, the $\mathcal{R}_{\text{off-diag}}(V)$ is also derivated in \cite{Liu2017LargeCN} by geometrically maximizing the pairwise angles between $v_i$ and $v_j$.
An optimization algorithm of \eqref{eq:AONMF} that with convergence guarantee can be seen in \cite{Li2020TwoFV}.
However, maximize $\mathcal{R}_{\text{off-diag}}(V)$ not only makes $V$ more orthogonal, but also reduces the scale of $V$, which may lead to incorrect clustering when a column in $V$ is an all-zeros vector.

Another form of orthogonality regularization is to maximize the following log-determinant function:
\begin{equation}
    \mathcal{R}_{\text{log-det}}(V) = \log\det(V^T V).
\end{equation}
Geometrically, $\det(V^T V)$ is equal to the square of the volume of the parallelotope formed by the vectors $\{v_1, \dots, v_r \}$ \cite{Liu2017LargeCN}.
Moreover, the $\mathcal{R}_{\text{log-det}}(V)$ can also be written as:
\begin{equation}
    \log\det(V^T V) = \log \prod_{i=1}^{r} \lambda_i(V^T V) = 2\sum_{i=1}^{r} \log \sigma_i (V),
\end{equation}
where $\lambda_i(X)$ and $\sigma_i(X)$ are the $i$-th largest eigenvalue and singular value of $X$, respectively.
Once a column in $V$ is $0_n$, then $\sigma_r(V) = 0$ and $\mathcal{R}_{\text{log-det}}(V) = +\infty$.
Therefore, the disadvantage of $\mathcal{R}_{\text{off-diag}}(V)$ can be avoid.
However, the optimization algorithm proposed by \cite{Liu2017LargeCN} has no theoretical guarantee of convergence.

In this paper, instead of updating $V$ as a whole, we modified the function $\log\det(V^T V)$ into the form $v_j^T M v_j$, and updating each $v_j$ sequentially.
Due to its simplicity, the theoretical convergence is proven under the framework of progressive hierarchical alternating least squares (PHALS) \cite{Hou2023APrgressiveHA}.
This modification makes the regularization more practical and broadly applicable in NMF and SymNMF.

\section{Proposed Model} \label{sec:3}
\subsection{Similarity and Dissimilarity Learning}
As mentioned earlier, the similarity matrix $S$ is usually constructed using a $k$-NN graph, which may mislead clustering since the NNs might belong to different clusters.
Additionally, existing adaptive similarity graph learning methods suffer from the high-dimensional search space and dependence on a low-quality initial similarity matrix $S_0$.
To address these issues, we propose constructing $S$ in an $n-1$ dimensonal space spanned by each $k$-th NN slice of the affinity matrix.
Specifically, let $A^{(k)} \in \R^{n \times n}$ represent the $k$-th NN slice as follows:
\begin{equation}
    a_{ij}^{(k)} = \begin{cases}
    \kappa(x_i, x_j), & \text{if} \; x_j \; \text{is a} \; k \text{-th NN of} \; x_i \\
    0, & \text{otherwise}
\end{cases}.
\label{eq:Ak}
\end{equation}
Then, $S$ is constructed as a weighted $k$-NN graph as follows:
\begin{equation}
    S(w) = \sum_{k=1}^{n} w_k A^{(k)}, \quad \text{where} \quad w \geq 0, w^T 1_n = 1.
    \label{eq:S_w}
\end{equation}
In \eqref{eq:S_w}, $w \in \R^{n}$ is the nonnegative combination coefficient that reflects the importance of $A^{(k)}, k = 1,\dots, n$, and we normalize $w$ so that $w^T 1_n = 1$.
Due to this constraint, the dimension of $w$ and $S(w)$ is $n-1$.
Accordingly, compared with existing adaptive similarity methods that search for $S$ in an $\mathcal{O}(n^2)$ dimensional space (as discussed in \ref{sec:2_2}), this approach is more feasible to reach the optimal similarity $S(w^{\ast})$.
Moreover, the $k$-NN graph $S_0$ defined in $\eqref{eq:S_knn}$ is a special case of \eqref{eq:S_w} that can be denoted as $S(w_0)$, where $w_0(1), \dots, w_0(k) = 1/k$ and $w_0(k+1), \dots, w_0(n) = 0$, i.e.:
\begin{equation}
    \frac{1}{k} S_{0} = S(w_0) = \frac{1}{k} \sum_{i=1}^{k} A^{(i)}.
\end{equation}
Therefore, compared with existing methods that heavily rely on the initial $S_0$, the proposed method is more stable since it leverages the entire affinity matrix by different slices $A^{(1)}, \dots, A^{(n)}$.

When $k$ is large, $\|A^{(k)} \|_F$ becomes very small.
To balance this, we normalize $A^{(k)}$ after \eqref{eq:Ak} as follows:
\begin{equation}
    \bar{A}^{(k)} \leftarrow A^{(k)} / \|A^{(k)} \|_F,
\end{equation}
and calculate $S(w)$ in \eqref{eq:S_w} using $\bar{A}^{(k)}$.
In the rest of this paper, we will continue to use $A^{(k)}$ for simplicity in notation.

Based on these definitions, we propose the similarity learning model as
\begin{equation}
\begin{aligned}
    \min_{V, w \geq 0} \tfrac{1}{2} \|S(w) - VV^T \|_F^2 - \alpha \mathcal{R}(V),  \; \text{s.t.}, \; w^T 1_n = 1.
    \label{eq:model_1}
\end{aligned}
\end{equation}
In this model, $S$ in \eqref{eq:SymNMF_intro} is replaced with $S(w)$. 
The term $\mathcal{R}(V)$ represents the orthogonality regularization, which will be introduced later.
The hyper-parameter $\alpha \geq 0$ controls the contribution of the orthogonality regularization.

To improve the discriminative of $S$, we introduce a dissimilarity matrix $D$ that with dual structure of $S$, which is constructed as a weighted $k$-NN graph with another coefficient $p \in \R^{n}$ as follows:
\begin{equation}
    D(p) = \sum_{k=1}^{n} p_k A^{(k)}, \quad \text{where} \quad p \geq 0, p^T 1_n = 1.
    \label{eq:D_p}
\end{equation}
Then, the joint similarity and dissimilarity learning model is formulated as:
\begin{equation}
\begin{aligned}
    &\min_{V, w, p \geq 0} \tfrac{1}{2} \|S(w) - VV^T \|_F^2 \!+\! \beta \langle D(p), VV^T \rangle \!-\! \alpha \mathcal{R}(V) \\ 
    &\text{s.t.} \quad w^T 1_n = p^T 1_n = 1, w^T p = 0,
\end{aligned}
\end{equation}
where $\langle D(p), VV^T \rangle = \sum_{i=1}^{n} \sum_{j=1}^{n} d_{ij}(p) \cdot (VV^T)_{ij}$ represents the dissimilarity regularization, $\beta \geq 0$ controls its contribution.
The mechanism of the dissimilarity regularization is as follows:
when $d_{ij}(p)$ is large, the $x_i$ and $x_j$ are considered to be dissimilar, thus the $(VV^T)_{ij}$ should be close to zero;
when $d_{ij}(p)$ is near zero, the $(VV^T)_{ij}$ is unimpacted.
Moreover, the constraint $w^T p = \sum_{k=1}^{n} w_k \cdot p_k = 0$ ensures that all $A^{(k)}$s cannot be used to construct both similarity and dissimilarity matrices simultaneously.

Finally, our proposed model is formulated as:
\begin{equation}
\begin{aligned}
    &\min_{V, w, p \geq 0} \tfrac{1}{2} \|S(w) - VV^T \|_F^2 \!+\! \beta \langle D(p), VV^T \rangle \!-\! \alpha \mathcal{R}(V) \\ 
    &\qquad \qquad  + \tfrac{\mu - 1}{2} \| S(w) \|_F^2 + \tfrac{\mu}{2} \|D(p) \|_F^2 \\
    &\text{s.t.} \quad w^T 1_n = p^T 1_n = 1, w^T p = 0,
    \label{eq:model}
\end{aligned}
\end{equation}
where $\frac{\mu - 1}{2} \| S(w) \|_F^2$ and $\frac{\mu}{2} \|D(p) \|_F^2$ control the densities of $S(w)$ and $D(p)$, respectively.
The larger the $\mu$ is, the more $w_k$ and $p_k$ are activated. 
Since the first term of \eqref{eq:model} already includes $\frac{1}{2} \|S(w) \|_F^2$, the coefficient of the fourth term becomes $\frac{\mu - 1}{2}$.

After solving \eqref{eq:model}, to get clustering result, we construct an augmented similarity matrix $Z \in \R^{n \times n}$ by combining the learned $S, D$ and $V$.
Let $Y = VV^T$, and we normalize $S, D, Y$ into range $[0, 1]$, i.e., $S \leftarrow S / \max(S)$.
Then, let
\begin{equation}
    z_{ij} = \begin{cases}
        1 - (1 - y_{ij} + d_{ij})(1 - s_{ij}), & \text{if} \; y_{ij} \geq d_{ij}\\
        (1 + y_{ij} - d_{ij})s_{ij}, & \text{if} \; y_{ij} < d_{ij} \\
    \end{cases}.
    \label{eq:aug_affinity}
\end{equation}
When $y_{ij} \geq d_{ij}$, $x_i$ and $x_j$ are likely to belong to the same class, \eqref{eq:aug_affinity} will increase the corresponding similarity $s_{ij}$.
Similarly, when $y_{ij} < d_{ij}$, $s_{ij}$ will be depressed.
Therefore, the similarity $S$ is further enhanced.
Finally, we apply spectral clustering \cite{AY2002OnSC} on $Z$ to get clustering result.

\subsection{Analysis of the Model in \eqref{eq:model}}
To better understand how our model works, we can view \eqref{eq:model} from the perspective of $V$ and $(w, p)$ respectively.
Before that, we first point out that $A^{(k)}$, $S(w)$ and $D(p)$ have the following useful properties:
\begin{itemize}
    \item $\forall k, \|A^{(k)} \|_F = 1$ and $\forall k \ne t, \langle A^{(k)}, A^{(t)} \rangle = 0$;
    \item $\|S(w) \|_F^2 = \sum_{k=1}^{n} w_k^2$ and $\|D(p) \|_F^2 = \sum_{k=1}^{n} p_k^2$;
    \item $\langle S(w), D(p) \rangle = \sum_{k=1}^{n} w_k \cdot p_k$.
\end{itemize}

\textbf{From the perspective of $V$}, \eqref{eq:model} is equivalent to
\begin{equation}
    \min_{V \geq 0} \frac{1}{2} \|S(w) - \beta D(p) - VV^T\|_F^2 - \alpha \mathcal{R}(V).
    \label{eq:model_V}
\end{equation}
When focusing on the first term, there are four cases to discuss:
\begin{enumerate}
    \item $s_{ij}(w) > 0$ and $d_{ij}(p) = 0$, which means that $x_i$ is similar to $x_j$, leading to $(VV^T)_{ij} > 0$
    \item $s_{ij}(w) = 0$ and $d_{ij}(p) > 0$, which means that $x_i$ is dissimilar to $x_j$, leading to $(VV^T)_{ij} \approx 0$
    \item $s_{ij}(w) = 0$ and $d_{ij}(p) = 0$, which means that the relation between $x_i$ and $x_j$ is unknown.
    One might think that this case would lead to $(VV^T)_{ij} \approx 0$.
    However, $VV^T$ is a low-rank matrix, thus the impact of this case on $V$ is negligible compared to the second case.
    \item $s_{ij}(w) > 0$ and $d_{ij}(p) > 0$ never occur since the constraint $w^T p = \langle S(w), D(p) \rangle = 0$.
\end{enumerate}
The role of $\mathcal{R}(V)$ will be analyzed in the next subsection.

\textbf{From the perspective of $(w, p)$}, \eqref{eq:model} is equivalent to
\begin{equation}
\begin{aligned}
    &\min_{w, p \geq 0} \frac{\mu}{2}\sum_{k=1}^{n} \left(w_k^2 + p_k^2 \right) + \sum_{k=1}^{n} c_k \cdot \left(\beta p_k - w_k \right) \\
    &\text{s.t.} \quad w^T 1_n = p^T 1_n = 1, w^T p = 0,
\end{aligned}
\end{equation}
where $c_k = \left\langle A^{(k)}, VV^T \right\rangle$.
It can be seen that $\mu$ controls the density of $w$ and $p$ (as well as $S(w)$ and $D(p)$, respectively), while $\beta$ controls the contribution of dissimilarity regularization.
Generally, the larger the $k$, the smaller the $c_k$.
For example, when $V$ is the ground-truth class assignment matrix, $c_k$ is equivalent to the accuracy shown in Fig. \ref{correct_k}.
This phenomenon causes $w$ and $p$ to automatically learn $k$ nearest neighbors and $t$ farthest neighbors, for some $k$ and $t$, respectively, with the weights being consistent with the correct rate of $A^{(k)}$s.

\subsection{New Orthogonality Regularization}
\begin{figure}[!t]
    \centering
    \includegraphics[width=\linewidth]{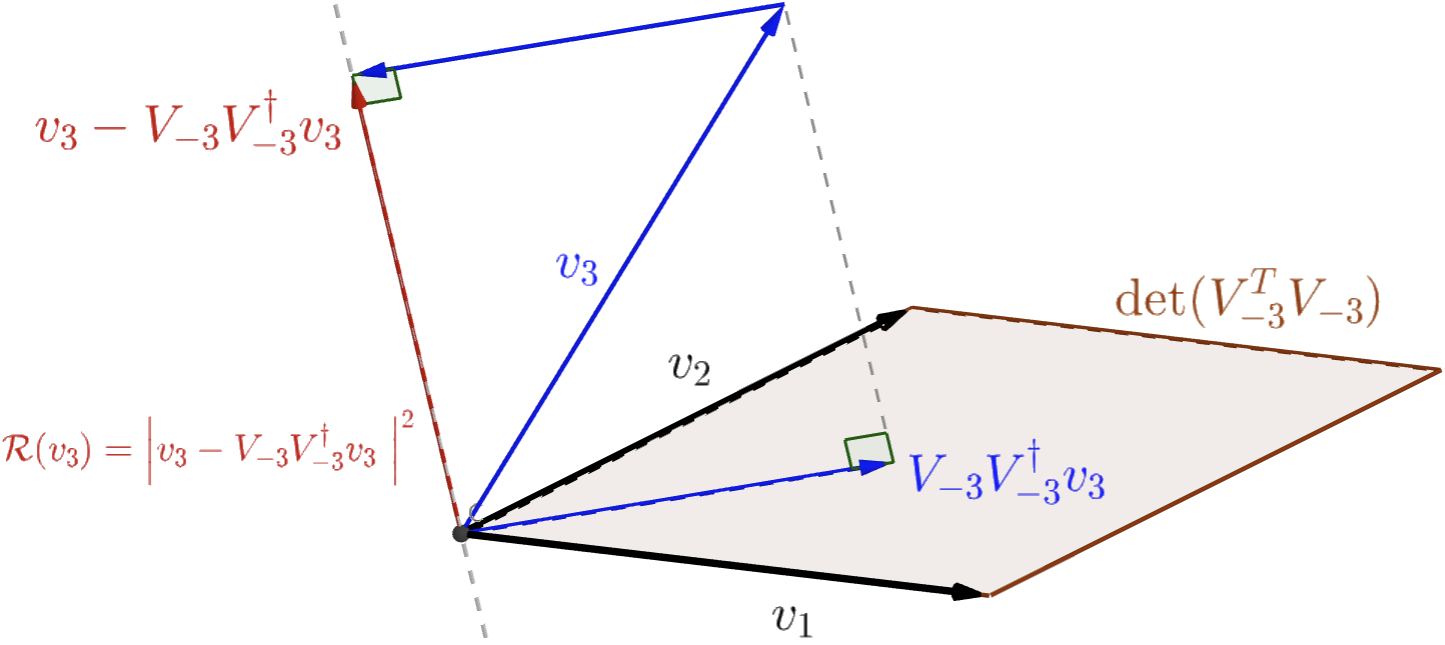}
    \caption{The geometric meaning of $\mathcal{R}(v_3)$, which can be seen as the square of distance between $v_3$ and the plane spanned by $\{v_1, v_2 \}$.}
    \label{fig:orth_geo}
\end{figure}

The role of orthogonality regularization $\mathcal{R}(V)$ is to make the learned $S(w)$ more discriminative.
As dicussed previously in subsection \ref{sec:2_3}, the widely used orthogonality regularization $\mathcal{R}_{\text{off-diag}}(v) = -\sum_{i=1}^{r} \sum_{j \ne i} v_i^T v_j$ \cite{Li2015TwoEA} may lead to incorrect clustering when a column in $V$ is $0_n$, and $\mathcal{R}_{\text{log-det}}(v) = \log\det(V^T V)$ \cite{Liu2017LargeCN} has no optimization algorithm that guarantees convergence.
To address this problem, we modified the function $\log\det(V^T V)$ to column-wisely updating $V$.
Specifically, the following proposition states that $\det(V^T V)$ can be reformulated into a function w.r.t. $v_j$.
\begin{proposition} \label{proposition_1}
    Let $V_{-j} = \left[v_1, \cdots, v_{j-1}, v_{j+1}, \cdots, v_{r} \right]$, $I_n$ represents the identity matrix of size $n$, $V^{\dag} \in \R^{r \times n}$ represents the Moore-Penrose pseudo-inverse of $V$.
    Assuming that $V \in \R^{n \times r}$ and $\mathrm{rank}(V) = r$, then $\forall j = 1,\dots, r$, we have:
    \begin{equation}
        \det(V^T V) = \det \left(V_{-j}^T V_{-j} \right) v_j^T \left(I_n - V_{-j} V_{-j}^{\dag} \right) v_j.
        \label{eq:det_vj}
    \end{equation}
\end{proposition}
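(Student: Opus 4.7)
The plan is to reduce the identity to a Schur complement calculation on the Gram matrix $V^T V$, after re-grouping the columns so that $v_j$ appears as the last one. Since column permutations of $V$ correspond to a simultaneous row and column permutation of $V^T V$, the determinant $\det(V^T V)$ is invariant; moreover, the same permutation applied to the $r-1$ remaining columns reproduces $V_{-j}^T V_{-j}$ as the upper-left block. Concretely, after this cosmetic reordering I can write
\begin{equation}
    V^T V \;=\; \begin{pmatrix} V_{-j}^T V_{-j} & V_{-j}^T v_j \\[2pt] v_j^T V_{-j} & v_j^T v_j \end{pmatrix}.
\end{equation}

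Next I will invoke the standard block-matrix determinant identity for a $2\times 2$ block matrix with invertible top-left block,
\begin{equation}
    \det\begin{pmatrix} A & B \\ C & D \end{pmatrix} = \det(A)\,\det\!\bigl(D - C A^{-1} B\bigr).
\end{equation}
The hypothesis $\mathrm{rank}(V) = r$ guarantees $\mathrm{rank}(V_{-j}) = r-1$, so $V_{-j}^T V_{-j}\in\R^{(r-1)\times (r-1)}$ is indeed invertible, which is exactly what makes the Schur complement well-defined. Applying the identity with $A = V_{-j}^T V_{-j}$, $B = V_{-j}^T v_j$, $C = v_j^T V_{-j}$, $D = v_j^T v_j$ gives
\begin{equation}
    \det(V^T V) = \det\!\bigl(V_{-j}^T V_{-j}\bigr)\cdot \Bigl(v_j^T v_j - v_j^T V_{-j}\bigl(V_{-j}^T V_{-j}\bigr)^{-1} V_{-j}^T v_j \Bigr).
\end{equation}

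The final step is to recognize the quantity inside the big parenthesis as $v_j^T (I_n - V_{-j} V_{-j}^{\dag}) v_j$. Because $V_{-j}$ has full column rank, its Moore--Penrose pseudo-inverse admits the closed form $V_{-j}^{\dag} = (V_{-j}^T V_{-j})^{-1} V_{-j}^T$, so $V_{-j} V_{-j}^{\dag} = V_{-j} (V_{-j}^T V_{-j})^{-1} V_{-j}^T$ is precisely the orthogonal projector onto the column space of $V_{-j}$. Substituting yields \eqref{eq:det_vj}.

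I do not anticipate a serious obstacle: the Schur complement formula is classical, and the only subtlety is justifying the invertibility of $V_{-j}^T V_{-j}$ and the explicit pseudo-inverse formula, both of which follow immediately from the standing rank assumption. If anything, care is needed only to state clearly that the column-permutation step preserves $\det(V^T V)$, and to note in passing the geometric reading: $v_j^T(I_n - V_{-j} V_{-j}^{\dag}) v_j$ is the squared distance from $v_j$ to $\mathrm{span}(V_{-j})$, so the formula is simply the base-times-height decomposition of the squared volume of the parallelotope spanned by the columns of $V$, which matches the picture given in Fig.~\ref{fig:orth_geo}.
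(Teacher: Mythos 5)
Your proof is correct, and it is a cleaner variant of the paper's argument. Both proofs start from the same block decomposition of the Gram matrix, $V^T V = \begin{psmallmatrix} V_{-j}^T V_{-j} & V_{-j}^T v_j \\ v_j^T V_{-j} & v_j^T v_j \end{psmallmatrix}$, but you take the Schur complement with respect to the \emph{top-left} block, $\det(A)\det(D - CA^{-1}B)$, which lands directly on $\det(V_{-j}^T V_{-j})\bigl(v_j^T v_j - v_j^T V_{-j}(V_{-j}^T V_{-j})^{-1}V_{-j}^T v_j\bigr)$ and finishes in one step via the full-column-rank formula $V_{-j}^{\dag} = (V_{-j}^T V_{-j})^{-1}V_{-j}^T$. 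The paper instead complements with respect to the bottom-right scalar block $v_j^T v_j$, obtaining $v_j^T v_j \,\det\bigl(V_{-j}^T V_{-j} - V_{-j}^T v_j v_j^T V_{-j}/(v_j^T v_j)\bigr)$, and then needs an additional invocation of Sylvester's determinant theorem to peel $\det(V_{-j}^T V_{-j})$ back out; that route also divides by $v_j^T v_j$, so it implicitly uses $v_j \neq 0_n$ (guaranteed by $\mathrm{rank}(V)=r$, but an extra thing to check). Your version needs only the invertibility of $V_{-j}^T V_{-j}$, which you correctly justify from the rank hypothesis, and your closing geometric remark (squared distance from $v_j$ to $\mathrm{span}(V_{-j})$, base-times-height) matches the interpretation the paper gives after the proposition.
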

\begin{proof} 
    By spilting $V$ as $V = [V_{1:j-1}, v_j, V_{j+1:r}]$, we have:
    \begin{equation}
    \resizebox{0.89\hsize}{!}{$
    \begin{aligned}
        \det(V^T V) 
        &= \begin{vmatrix}
            V_{1:j-1}^{T} V_{1:j-1}^{T} & V_{1:j-1}^{T} v_j & V_{1:j-1}^{T} V_{j+1:r} \vspace{2mm} \\ 
            v_j^{T} V_{1:j-1}^{T} & v_j^{T} v_j & v_j^{T} V_{j+1:r} \vspace{2mm} \\
            V_{j+1:r}^{T} V_{1:j-1}^{T} & V_{j+1:r}^{T} v_j & V_{j+1:r}^{T} V_{j+1:r}
        \end{vmatrix} \\
        &= \begin{vmatrix}
            V_{-j}^{T} V_{-j} & V_{-j}^T v_j  \vspace{2mm} \\
            v_j^{T} V_{-j} & v_j^T v_j
        \end{vmatrix}.
    \end{aligned}
    $}
    \end{equation}
    By appling $\det \begin{psmallmatrix} A & B \\ C & D \end{psmallmatrix} = \det(D) \det(A - B D^{-1} C)$, we have:
    \begin{equation}
        \begin{vmatrix}
            V_{-j}^{T} V_{-j} & V_{-j}^T v_j  \vspace{2mm} \\
            v_j^{T} V_{-j} & v_j^T v_j
        \end{vmatrix}
         = v_j^T v_j \left| V_{-j}^T V_{-j} - \frac{V_{-j}^T v_j v_j^T V_{-j}}{v_j^T v_j} \right|
         \label{det_derivation_1}.
    \end{equation}
    According to the Sylvester's determinant theorem: $\det(X + AB) = \det(X) \det(I_n + B X^{-1} A)$, \eqref{det_derivation_1} can be further simpliefied to
    \begin{equation}
    \resizebox{0.89\hsize}{!}{$
    \begin{aligned}
        &(v_j^T v_j) \det(V_{-j}^{T} V_{-j}) \left(1 - \frac{v_j^T \left[V_{-j} \left(V_{-j}^T V_{-j} \right)^{-1} V_{-j}^{T} \right] v_j}{v_j^T v_j} \right) \\
        &= \det(V_{-j}^{T} V_{-j}) \left(v_j^T v_j - v_j^T \left[V_{-j} \left(V_{-j}^T V_{-j} \right)^{-1} V_{-j}^{T} \right] v_j \right) \\
        &= \det(V_{-j}^{T} V_{-j}) \left(v_j^T v_j - v_j^T \left[V_{-j} V_{-j}^{\dag} \right] v_j \right) \\
        &= \det(V_{-j}^{T} V_{-j}) v_j^T\left(I_n - V_{-j} V_{-j}^{\dag} \right)v_j,
    \end{aligned}
    $}
    \end{equation}
    which is the right side of \eqref{eq:det_vj}.
\end{proof}

Geometrically, $\det(V^T V)$ is equal to the square of the volume of the parallelotope formed by the vectors $\{v_1, \dots, v_r\}$ \cite{Liu2017LargeCN}.
Therefore, Proposition \ref{proposition_1} states that the $\det(V^T V)$ can be expressed as the product of $\det(V_{-j}^T V_{-j})$ and the contribution of $v_j$.
Based on this observation, we define the orthogonality regularization w.r.t. $v_j$ as
\begin{equation}
    \mathcal{R}(v_j) = v_j^T \left(I_n - V_{-j} V_{-j}^{\dag} \right) v_j = v_j^T M_j v_j,
\end{equation}
where $M_j = I_n - V_{-j} V_{-j}^{\dag}$.
The geometric meaning of $\mathcal{R}(v_j)$ is illustrated in Fig. \ref{fig:orth_geo}.
The computational merits of $\mathcal{R}(v_j)$ are summarized as follows:
\begin{enumerate}
    \item \textbf{Clustering correctness:} When maximizing  $\mathcal{R}(v_j)$, $v_j$ will not be $0_n$. More generally, $v_j$ will be far from the range space of $V_{-j}$, making the clustering partitions $v_j$ be different from $V_{-j}$.
    \item \textbf{Convergence guarantee:} The algorithm for solving \eqref{eq:model_V} can be derived under the framework of PHALS \cite{Hou2023APrgressiveHA} with theoretical convergence guarantee, which will be introduced in the next section. 
    \item \textbf{Numerical stability:} Since $M_j$ is a positive semi-definite matrix whose largest eigenvalue is one, we have:
    \begin{equation}
        0 \leq \mathcal{R}(v_j) \leq \|v_j \|_2^2 \quad \text{and} \quad 0 \leq \mathcal{R}(V) \leq \|V \|_F^2.
        \label{eq:sup_RV}
    \end{equation}
    This is much more stable than $\mathcal{R}_{\text{log-det}}(V) = \log\det(V^T V)$, whose lower bound is $-\infty$.
\end{enumerate}

\begin{remark}
    $\mathcal{R}(V)$ cannot be explicitly expressed as a function w.r.t. $V$.
    Therefore, optimization problems with $\mathcal{R}(V)$ can only be optimized column-wisely, and $V_{-j}$ is fixed when updating $v_{j}$.
\end{remark}

\section{Optimization} \label{sec:4}
In this section, we propose an alternating iterative algorithm to solve \eqref{eq:model}, i.e., when updating a variable, all the other variables are fixed and treated as constants.
\subsection{Updating $V$}
When fixing $w$ and $p$, the subproblem of \eqref{eq:model} w.r.t. $V$ is:
\begin{equation}
    \min_{V \geq 0} \frac{1}{2} \|S(w) - VV^T\|_F^2 + \beta \langle D(p), VV^T \rangle - \alpha \mathcal{R}(V).
    \label{eq:model_V2}
\end{equation}
For each $j = 1,\dots, r$, $V_{-j}$ is fixed, and the subproblem of \eqref{eq:model_V2} w.r.t. $v_j$ is:
\begin{equation}
\begin{aligned}
    \min_{v_j \geq 0} &\frac{1}{2} \left\|S(w) - V_{-j}V_{-j}^T - v_j v_j^T \right\|_F^2 + \beta \langle D(p), v_j v_j^T \rangle \\
    &\qquad - \alpha v_j^T M_j v_j
\end{aligned},
\end{equation}
which can be further simplified as:
\begin{equation}
    \min_{v_j \geq 0} \frac{1}{2} \left\|S(w) - \beta D(p) + \alpha M_j - V_{-j}V_{-j}^T - v_j v_j^T \right\|_F^2.
    \label{eq:model_vj}
\end{equation}
It can be seen that the subproblem \eqref{eq:model_vj} is mathematically equivalent to rank-one SymNMF, which can be efficiently solved by applying the PHALS algorithm \cite{Hou2023APrgressiveHA}.

\begin{lemma}[Theorem 2, \cite{Hou2023APrgressiveHA}] \label{lemma2}
    When the PHALS algorithm is applied to solving \eqref{eq:model_vj}, the objective functions of \eqref{eq:model_vj} and \eqref{eq:model_V2} are monotonically decreasing.
\end{lemma}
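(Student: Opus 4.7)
The plan is to establish the lemma by two steps: (i) show that, for fixed $V_{-j}$, $w$, and $p$, the $v_j$-subproblem of \eqref{eq:model_V2} and the problem \eqref{eq:model_vj} have objective values that differ by a constant independent of $v_j$, so any monotonic decrease on one transfers to the other; (ii) invoke the convergence guarantee of PHALS (Theorem 2 of \cite{Hou2023APrgressiveHA}) applied to the rank-one SymNMF form \eqref{eq:model_vj} to obtain the decrease.

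For step (i), I would simply expand the $v_j$-dependent part of \eqref{eq:model_V2}. Writing $VV^T = V_{-j}V_{-j}^T + v_j v_j^T$ and $\mathcal{R}(v_j) = v_j^T M_j v_j = \langle M_j, v_j v_j^T\rangle$, and collecting all $v_j$-dependent terms into a single Frobenius norm via $\|B - v_j v_j^T\|_F^2 = \|B\|_F^2 - 2\langle B, v_j v_j^T\rangle + \|v_j v_j^T\|_F^2$, one finds that \eqref{eq:model_V2} restricted to $v_j$ equals the objective of \eqref{eq:model_vj} plus a constant depending only on $V_{-j}, w, p, \alpha, \beta, \mu$. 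This is essentially the derivation already sketched between \eqref{eq:model_V2} and \eqref{eq:model_vj} in the text; I would simply carry it out cleanly and state the equality of sublevel behavior explicitly.

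For step (ii), \eqref{eq:model_vj} is exactly a rank-one nonnegative symmetric factorization problem of the form $\min_{v_j \geq 0}\|T_j - v_j v_j^T\|_F^2$ with $T_j = S(w) - \beta D(p) + \alpha M_j - V_{-j}V_{-j}^T$ (symmetric by construction, though not necessarily PSD). This is the setting in which the PHALS update rule of \cite{Hou2023APrgressiveHA} is defined, and Theorem 2 of that paper states that each PHALS step monotonically decreases the associated rank-one SymNMF objective regardless of the sign-definiteness of $T_j$. Applying that theorem gives the first monotonicity statement. Combining with step (i) and then chaining the $r$ updates $j = 1, \ldots, r$ (each of which only changes $v_j$ while holding $V_{-j}$ fixed) yields the monotonic decrease of the objective of \eqref{eq:model_V2} as well.

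The main obstacle I anticipate is the second step: verifying that the PHALS theorem in \cite{Hou2023APrgressiveHA} applies verbatim to the target matrix $T_j$, which contains the indefinite piece $-\beta D(p) + \alpha M_j$ and is not guaranteed to be entrywise nonnegative. If the cited theorem is stated only for nonnegative or PSD targets, the proof would need a short adaptation argument—re-deriving the PHALS column update for a general symmetric $T_j$ and reproducing the monotonicity inequality by the standard block-coordinate descent argument (projected gradient step with a suitable Lipschitz-type majorizer on the rank-one quartic in $v_j$). Modulo this check, the rest of the argument is routine.
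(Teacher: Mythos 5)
Your proposal is correct and takes essentially the same route as the paper, which simply rewrites the $v_j$-subproblem of \eqref{eq:model_V2} into the rank-one SymNMF form \eqref{eq:model_vj} (absorbing the $\beta\langle D(p), v_jv_j^T\rangle$ and $-\alpha v_j^T M_j v_j$ terms into the target matrix, up to a constant independent of $v_j$) and then cites Theorem 2 of \cite{Hou2023APrgressiveHA} without further proof. The obstacle you anticipate is not an issue: PHALS is formulated for general symmetric (not necessarily nonnegative or PSD) targets, and the paper's remark following the lemma handles the residual asymmetry of $S(w)-\beta D(p)$ by the identity $\|M - vv^T\|_F^2 = \|\tfrac{M+M^T}{2} - vv^T\|_F^2 - \|\tfrac{M+M^T}{2}\|_F^2 + \|M\|_F^2$.
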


\begin{remark}
    In \eqref{eq:model_vj}, $S(w) - \beta D(p)$ is asymmetric, but it is equivalent to symmetric case because:
    \begin{equation}
    \resizebox{0.89\hsize}{!}{$
        \|M - vv^T\|_F^2 = \|\tfrac{M + M^T}{2} - vv^T \|_F^2 - \|\tfrac{M + M^T}{2} \|_F^2 + \|M \|_F^2.
    $}
    \end{equation}
\end{remark}

\subsection{Updating $(w, p)$}
When fixing $V$, the subproblem of \eqref{eq:model} w.r.t $(w, p)$ is:
\begin{equation}
\begin{aligned}
    &\min_{w, p \geq 0} \frac{\mu}{2}\sum_{k=1}^{n} \left(w_k^2 + p_k^2 \right) + \sum_{k=1}^{n} c_k \cdot \left(\beta p_k - w_k \right) \\
    &\text{s.t.} \quad w^T 1_n = p^T 1_n = 1, w^T p = 0,
\end{aligned}
\label{eq:model_wp1}
\end{equation}
where $c_k = \left\langle A^{(k)}, VV^T \right\rangle$.
Mathematically, \eqref{eq:model_wp1} is a non-convex optimization problem due to the constraint $w^T p = 0$.
To solve this problem, we relax this constraint into a regularization as follows:
\begin{equation}
\resizebox{0.89\hsize}{!}{$
\begin{aligned}
    &\min_{w, p \geq 0} \frac{\mu}{2}\sum_{k=1}^{n} \left(w_k^2 + p_k^2 \right) + \sum_{k=1}^{n} c_k \cdot \left(\beta p_k - w_k \right) + \eta \sum_{k=1}^{n} w_k \cdot p_k \\
    &\text{s.t.} \quad w^T 1_n = p^T 1_n = 1,
\end{aligned}
$}
\label{eq:model_wp}
\end{equation}
where $\eta > 0$ is a hyper-parameter.

\begin{proposition} \label{proposition:opt}
    Let $(w^{\ast}, p^{\ast})$ be a global optimum of \eqref{eq:model_wp}, if it satisfies:
    \begin{equation}
        \sum_{k=1}^{n} w_k^{\ast} \cdot p_k^{\ast} = 0,
        \label{eq:wp_condition}
    \end{equation}
    then $(w^{\ast}, p^{\ast})$ is also a global optimum of \eqref{eq:model_wp1}.
\end{proposition}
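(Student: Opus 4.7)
The plan is to prove this by a standard penalty-method argument, exploiting the fact that \eqref{eq:model_wp} is a relaxation of \eqref{eq:model_wp1} in which the non-convex equality constraint $w^T p = 0$ has been absorbed into the objective as the penalty term $\eta \sum_{k=1}^n w_k p_k$. The key observation is that the two problems share the same simplex-type constraints $w \geq 0$, $p \geq 0$, $w^T 1_n = p^T 1_n = 1$, so the feasible set of \eqref{eq:model_wp1} is a subset of that of \eqref{eq:model_wp}, and the two objectives agree exactly on vectors satisfying $w^T p = 0$.

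First I would verify that $(w^*, p^*)$ is feasible for \eqref{eq:model_wp1}: the nonnegativity and simplex constraints hold because $(w^*, p^*)$ is feasible for \eqref{eq:model_wp}, and the missing constraint $w^{*T} p^* = 0$ is precisely the hypothesis \eqref{eq:wp_condition}. Next, for an arbitrary competitor $(w, p)$ that is feasible for \eqref{eq:model_wp1}, I would note that $(w, p)$ is automatically feasible for \eqref{eq:model_wp}, so the global optimality of $(w^*, p^*)$ in the relaxed problem yields
\begin{equation*}
f(w^*, p^*) + \eta \sum_{k=1}^n w_k^* p_k^* \;\leq\; f(w, p) + \eta \sum_{k=1}^n w_k p_k,
\end{equation*}
where $f$ denotes the common part of the two objectives. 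Applying $\sum_k w_k^* p_k^* = 0$ on the left (by assumption) and $\sum_k w_k p_k = 0$ on the right (by feasibility of $(w, p)$ for \eqref{eq:model_wp1}) collapses the inequality to $f(w^*, p^*) \leq f(w, p)$, which is exactly what global optimality in \eqref{eq:model_wp1} requires.

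There is no substantial obstacle here: the argument is a one-line consequence of the fact that the penalty term vanishes on the feasible set of the original problem, and that relaxing a constraint can only enlarge the feasible region. The only thing worth stating carefully is that $\eta > 0$ plays no role in the conclusion itself (it only matters for whether the hypothesis \eqref{eq:wp_condition} can be satisfied in practice); the argument goes through for any $\eta \in \mathbb{R}$. I would keep the write-up to a short paragraph and, if space permits, add a brief remark noting that the converse is not claimed, since without \eqref{eq:wp_condition} the relaxed optimum may lie strictly outside the feasible region of \eqref{eq:model_wp1}.
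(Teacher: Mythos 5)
Your proof is correct, and it takes a genuinely more direct route than the paper's. The paper argues via the penalty parameter: it introduces an arbitrary $\bar{\eta} \geq \eta$, uses the nonnegativity of the penalty term $\eta \sum_k w_k p_k$ on the simplex to show $f(w^{\ast}, p^{\ast}, \eta) \leq f(w, p, \eta) \leq f(w, p, \bar{\eta})$ for all feasible $(w,p)$, concludes that $(w^{\ast}, p^{\ast})$ remains globally optimal for every larger penalty weight, and then identifies the $\bar{\eta} = +\infty$ limit with the constrained problem \eqref{eq:model_wp1}. You instead compare objective values only over the feasible set of \eqref{eq:model_wp1}, where the penalty term vanishes identically on both sides of the optimality inequality; this collapses the argument to two lines, avoids the somewhat informal infinite-penalty limiting step, and, as you correctly observe, shows the conclusion holds for any $\eta \in \mathbb{R}$ rather than requiring $\eta > 0$ and monotonicity of the objective in the penalty weight. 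The paper's formulation buys a slightly stronger intermediate fact (stability of the optimum under increasing $\eta$, which connects to the exact-penalty interpretation and to the empirical choice $\eta = 0.99\mu$), but for the proposition as stated your restriction argument is sufficient and cleaner.
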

\begin{proof}
    Let the objective function and the constraint space of \eqref{eq:model_wp} be $f(w, p, \eta)$ and $\Delta^{n} = \{w \in \R^{n} \;|\; w \geq 0, w^T 1_n = 1 \}$, respectively.
    Since $(w^{\ast}, p^{\ast})$ is a global optimum of \eqref{eq:model_wp}, for any $\bar{\eta} \geq \eta$, we have:
    \begin{equation}
        f(w^{\ast}, p^{\ast}, \eta) \leq f(w, p, \eta) \leq f(w, p, \bar{\eta}), \; \forall \; w, p \in \Delta^{n}.
    \end{equation}
    Moreover, since $\sum_{k=1}^{n} w_k^{\ast} \cdot p_k^{\ast} = 0$, we have $f(w^{\ast}, p^{\ast}, \eta) = f(w^{\ast}, p^{\ast}, \bar{\eta})$, which indicates that:
    \begin{equation}
        f(w^{\ast}, p^{\ast}, \bar{\eta}) \leq f(w, p, \bar{\eta}), \; \forall \; w, p \in \Delta^{n}.
    \end{equation}
    This means that, if we replace $\eta$ in \eqref{eq:model_wp} with $\bar{\eta}$, then $(w^{\ast}, p^{\ast})$ is still a global optimum.
    Particularly, when $\bar{\eta} = +\infty$, \eqref{eq:model_wp} is equivalent to \eqref{eq:model_wp1}, thus the proposition holds.
\end{proof}

Proposition \ref{proposition:opt} inspires us to solve \eqref{eq:model_wp} instead of \eqref{eq:model_wp1}, and the following proposition provides the condition under which \eqref{eq:model_wp} can be efficiently solved.
\begin{proposition}
    If $\mu > \eta$, then \eqref{eq:model_wp} is $(\mu-\eta)$-strongly convex problem, thus \eqref{eq:model_wp} has a unique global optimum.
\end{proposition}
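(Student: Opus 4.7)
The plan is to compute the Hessian of the objective of \eqref{eq:model_wp} with respect to the concatenated variable $(w, p) \in \R^{2n}$ and verify that it dominates $(\mu - \eta) I_{2n}$ whenever $\mu > \eta$. The linear term $\sum_k c_k(\beta p_k - w_k)$ contributes nothing after differentiating twice, so only the quadratic block $\frac{\mu}{2} \sum_k (w_k^2 + p_k^2) + \eta \sum_k w_k p_k$ survives, producing the block Hessian
\[
H = \begin{pmatrix} \mu I_n & \eta I_n \\ \eta I_n & \mu I_n \end{pmatrix} \in \R^{2n \times 2n}.
\]
Note that $H$ is independent of $(w, p)$, so strong convexity will amount to a single eigenvalue estimate.

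Rather than diagonalizing $H$ explicitly, I would bound the quadratic form directly. For any $u = (x, y) \in \R^{2n}$ with $x, y \in \R^n$, a short expansion gives
\[
u^T H u - (\mu - \eta)\|u\|_2^2 = \eta\bigl(\|x\|_2^2 + \|y\|_2^2 + 2 x^T y\bigr) = \eta\|x + y\|_2^2 \geq 0,
\]
which shows $H \succeq (\mu - \eta) I_{2n}$. Under the hypothesis $\mu > \eta$ (with $\eta > 0$ as declared when introducing \eqref{eq:model_wp}), this lower bound is strictly positive, so the objective function of \eqref{eq:model_wp} is $(\mu - \eta)$-strongly convex on all of $\R^{2n}$, and hence also on the feasible region.

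The remaining step is standard. The feasible set $\Delta^n \times \Delta^n$ is the Cartesian product of two probability simplices, so it is nonempty, closed, bounded, and convex. A strongly convex function on a nonempty convex set admits at most one global minimizer, and existence follows from the Weierstrass theorem since the domain is compact. Neither step looks technically hard: once $H$ is written down, the identity $u^T H u - (\mu - \eta)\|u\|_2^2 = \eta\|x + y\|_2^2$ is a one-line completion of squares, and the main thing to track is that $\eta$ is nonnegative so that the residual remains a valid lower bound.
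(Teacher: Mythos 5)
Your proof is correct and follows essentially the same route as the paper: both compute the block Hessian $H = \begin{psmallmatrix} \mu I_n & \eta I_n \\ \eta I_n & \mu I_n \end{psmallmatrix}$ and establish $H \succeq (\mu-\eta) I_{2n}$, the only difference being that you verify this by completing the square ($u^T H u - (\mu-\eta)\|u\|_2^2 = \eta\|x+y\|_2^2$) while the paper computes the eigenvalues $\mu \pm \eta$ explicitly via the block-determinant identity. Your additional remarks on compactness of the feasible set and existence of the minimizer are a harmless (and slightly more complete) finishing touch.
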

\begin{proof}
    It is easy to verify that the constraint space in \eqref{eq:model_wp} is a convex set.
    Besides, the Hessian of the objective function in \eqref{eq:model_wp} w.r.t. $(w, p)$ is:
    \begin{equation}
        H = \begin{bmatrix}
            \mu I_n & \eta I_n \\
            \eta I_n & \mu I_n
        \end{bmatrix}.
    \end{equation}
    Since $\det \left(\begin{smallmatrix} A & B \\ B & A \end{smallmatrix} \right) = \det(A - B) \det(A + B)$, the eigenvalues of $H$ are values of $\lambda$ that satisfies:
    \begin{equation}
        \det(H - \lambda I_{2n}) = (\mu - \eta - \lambda)^n (\mu + \eta - \lambda)^n = 0,
    \end{equation}
    which indicates that $\lambda = \mu - \eta$ or $\lambda = \mu + \eta$.
    Therefore, the objective function in \eqref{eq:model_wp} is $(\mu - \eta)$-strongly convex, and \eqref{eq:model_wp} has a unique global optimum.
\end{proof}

Benefiting from the strongly convexity, \eqref{eq:model_wp} can be efficiently solved by alternately solving the following two sub-problems at the $t$-th iteration:
\begin{subequations}
    \begin{alignat}{2}
        w^{(t+1)} &=\!\! \mathop{\arg\min}\limits_{w \geq 0, w^T 1_n = 1} \sum_{k=1}^{n} \frac{\mu}{2} w_k^2 + (\eta p_k^{(t)} - c_k) \cdot w_k \\
        p^{(t+1)} &=\!\! \mathop{\arg\min}\limits_{p \geq 0, p^T 1_n = 1}  \sum_{k=1}^{n} \frac{\mu}{2} p_k^2 + (\eta w_k^{(t+1)} + \beta c_k) \cdot p_k,
    \end{alignat}
\end{subequations}
which can be further simplified as:
\begin{subequations}
    \begin{alignat}{2}
        w^{(t+1)} &= \mathop{\arg\min}\limits_{w \geq 0, w^T 1_n = 1} \left\|w + (\eta p^{(t)} - c) / \mu \right\|_2^2 \label{eq:min_w} \\
        p^{(t+1)} &= \mathop{\arg\min}\limits_{p \geq 0, p^T 1_n = 1}  \left\|p + (\eta w^{(t+1)} + \beta c) / \mu \right\|_2^2, \label{eq:min_p}
    \end{alignat}
\end{subequations}
which are the simplex projection problems that can be solved exactly with $\mathcal{O}(n \log n)$ complexity \cite{Wang2013ProjectionOT}.

By alternately solving \eqref{eq:min_w} and \eqref{eq:min_p}, the $(w^{t}, p^{t})$ converges to the global optimum of \eqref{eq:model_wp} $(w^{\ast}, p^{\ast})$, typically within $t_{\max} = 20$ iterations.
In our experiments, the condition $\sum_{k=1}^{n} w_k^{\ast} \cdot p_k^{\ast} = 0$ almost always holds when $\eta = 0.99 \mu$ and $\mu$ is not too large, the reason for which needs further study.

Finally, the optimization algorithm of the proposed model is summarized in Alg. \ref{alg:model}.
Instead of initializing variables randomly, we set $k_0 = \lfloor \log_2(n) \rfloor + 1$ suggested by \cite{Luxburg2004ATO}.
Then, we initialize the first $k_0$ elements of $w$ as $1 / k_0$, and the last $n - k_0$ elements of $p$ as $1 / (n - k_0)$, while the rest elements of $w$ and $p$ as zeros.
$V$ is initialized by using PHALS($S(w), V_0$) (Alg. 2, \cite{Hou2023APrgressiveHA}), where $V_0$ is an $n \times r$ matrix whose elements are uniformly sampled in the range of $[0, 1]$.
The stopping condition is met if any one of the following three criteria is satisfied:
\begin{enumerate}
    \item The iteration count of while-loop $b$ reaches $1000$.
    \item The relative change of loss $\frac{|\mathrm{loss}(b-1) - \mathrm{loss}(b)|}{|\mathrm{loss(b-1)}|} \leq 10^{-4}$.
    \item The relative change of variables $\Delta_b \leq 10^{-4}$, where
\end{enumerate}
\begin{equation}
    \Delta_b = \tfrac{\|V^{(b)} - V^{(b-1)}\|_F}{\|V^{(b-1)} \|_F} + \tfrac{\|w^{(b)} - w^{(b-1)}\|_2}{\|w^{(b-1)} \|_2}  + \tfrac{\|p^{(b)} - p^{(b-1)}\|_2}{\|p^{(b-1)} \|_2}.
\end{equation}

\subsection{Convergence Analysis}
As mentioned previously, an advantage of the proposed orthogonality regularization is the theoretical convergence guarantee, which is described by the following two theorems.

\begin{theorem} \label{theorem:convergence_loss}
    In Alg. \ref{alg:model}, the objective function of \eqref{eq:model} converges to a finite value.
\end{theorem}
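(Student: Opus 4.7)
The strategy is the standard one for alternating-minimization algorithms: exhibit a sequence of objective values that is (i) monotonically non-increasing and (ii) bounded below, and invoke the monotone convergence theorem.

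For the monotone-decrease part I would handle the two block updates separately. The $V$-update is solved column-wise by PHALS on the subproblem \eqref{eq:model_V2}; since fixing $w$ and $p$ turns $S(w)$ and $D(p)$ into constants, the value of \eqref{eq:model} drops by the same amount as \eqref{eq:model_V2}, so Lemma \ref{lemma2} applies verbatim and each $v_j$-update is non-increasing. The $(w,p)$-update is more delicate: the inner alternation \eqref{eq:min_w}--\eqref{eq:min_p} is executed on the penalized surrogate \eqref{eq:model_wp}, not on the exact $(w,p)$-subproblem \eqref{eq:model_wp1} of \eqref{eq:model}. My plan is to invoke Proposition \ref{proposition:opt}: since the unique surrogate minimizer $(w^{\ast},p^{\ast})$ satisfies $w^{\ast T} p^{\ast}=0$ whenever $\eta$ is chosen as in the algorithm, it is simultaneously the global minimizer of \eqref{eq:model_wp1}, so the value of \eqref{eq:model} cannot increase across this block either.

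For the lower bound I would exploit coercivity in $V$ together with the simplex constraints. Because $w,p\in\Delta^{n}$, we have $\|S(w)\|_F^2=\sum_k w_k^2\le 1$ and analogously $\|D(p)\|_F^2\le 1$, which bounds the last two terms of \eqref{eq:model} uniformly. The cross term $\beta\langle D(p),VV^T\rangle$ is non-negative by elementwise nonnegativity. For the Frobenius fit, the elementary inequality $\|A-B\|_F^2 \ge \tfrac{1}{2}\|B\|_F^2 - \|A\|_F^2$ combined with $\|VV^T\|_F^2=\|V^TV\|_F^2\ge \|V\|_F^4/r$ (Cauchy--Schwarz on the $r$ eigenvalues of $V^TV$) gives
\[
\tfrac{1}{2}\|S(w)-VV^T\|_F^2 \;\ge\; \tfrac{1}{4r}\|V\|_F^4 \;-\; \tfrac{1}{2}.
\]
The bound $0\le \mathcal{R}(V)\le \|V\|_F^2$ from \eqref{eq:sup_RV} then shows that the whole objective dominates a quartic-minus-quadratic polynomial in $\|V\|_F$, which is bounded below by a finite constant independent of the iterate. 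Concatenating the two facts yields a monotone non-increasing, bounded-below sequence, whose limit is finite.

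The principal obstacle is the surrogate-versus-original subtlety in the $(w,p)$ step: strictly speaking the inner loop decreases \eqref{eq:model_wp} rather than the $(w,p)$-restriction of \eqref{eq:model}, and the equality of the two minima relies on the side condition $w^{\ast T} p^{\ast}=0$ that the paper verifies only empirically. A cleaner fallback, if one prefers to avoid that assumption, is to run the entire argument on the penalized objective $F+\eta\, w^T p$, which the algorithm unambiguously decreases at every substep; its coercivity-based lower bound is identical and the penalty $\eta\, w^T p\ge 0$ is uniformly bounded on $\Delta^n\times\Delta^n$, so the convergence of $F+\eta\, w^T p$ to a finite value immediately transfers to $F$.
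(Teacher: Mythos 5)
Your proof is correct and follows the same overall skeleton as the paper's (monotone decrease plus a uniform lower bound), but the two halves differ from the paper in instructive ways. For the lower bound, the paper completes the square: it absorbs $-\alpha\|V\|_F^2$ into the fit term by writing $\tfrac{1}{2}\|S(w)-VV^T\|_F^2-\alpha\|V\|_F^2=\tfrac{1}{2}\|\alpha I_n+S(w)-VV^T\|_F^2-\alpha\mathrm{Tr}(S(w))-\tfrac{\alpha^2}{2}\|I_n\|_F^2$, drops the nonnegative square, and bounds the remaining terms using the simplex constraints, arriving at the explicit constant $-\sqrt{n}-\tfrac{\alpha^2 n}{2}-\tfrac{n}{2}$. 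You instead argue by coercivity, via $\|A-B\|_F^2\ge\tfrac{1}{2}\|B\|_F^2-\|A\|_F^2$ and $\|VV^T\|_F^2\ge\|V\|_F^4/r$, so the objective dominates $\tfrac{1}{4r}\|V\|_F^4-\alpha\|V\|_F^2-C$, which is bounded below (by $-\alpha^2 r-C$). Both are valid; the paper's trick yields a cleaner closed-form constant, while your route is more elementary, additionally establishes boundedness of the iterates $\|V^{(b)}\|_F$ (which the paper's argument does not give and which is useful for the subsequent KKT analysis), and does not rely on the specific structure $\mathrm{Tr}(S(w))=w_1\mathrm{Tr}(A^{(1)})$. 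For the monotonicity half, you are in fact more careful than the paper: the paper simply asserts that since the $(w,p)$ block attains the global optimum of \eqref{eq:model_wp}, the objective of \eqref{eq:model} decreases, silently passing over the fact that \eqref{eq:model_wp} is a penalized surrogate of \eqref{eq:model_wp1} and that the transfer of optimality rests on the empirically verified condition $w^{\ast T}p^{\ast}=0$ of Proposition \ref{proposition:opt}. Your fallback of running the whole argument on $F+\eta\,w^Tp$, which the algorithm provably decreases at every substep and whose penalty is uniformly bounded on $\Delta^n\times\Delta^n$, closes this gap cleanly and is arguably the more rigorous statement of the theorem. The only residual caveat, which applies equally to the paper, is that the inner loop runs a fixed number of iterations and so only approximately attains the minimizer of \eqref{eq:model_wp}; a fully airtight argument would either let that loop run to convergence or note that each inner alternation step itself is non-increasing for the surrogate.
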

\begin{proof}
    In the proposed optimization algorithm, \eqref{eq:model_V} and \eqref{eq:model_wp} are alternatively solved.
    For the former, one-step PHALS algorithm is applied, and the objective function of \eqref{eq:model_V} is monotonically decreasing according to Lemma \ref{lemma2}.
    For the latter, the global optimum is obtained.
    Therefore, the objective function of \eqref{eq:model} is monotonically decreasing.
    Moreover, the objective function of \eqref{eq:model} has a lower bound:
    \begin{equation}
    \begin{aligned}
        &\tfrac{1}{2} \|S(w) - VV^T \|_F^2 + \beta \langle D(p), VV^T \rangle - \alpha \mathcal{R}(V) \\ 
            &\qquad + \tfrac{\mu - 1}{2} \| S(w) \|_F^2 + \tfrac{\mu}{2} \|D(p) \|_F^2 \\
        &\geq \tfrac{1}{2} \|S(w) - VV^T \|_F^2 - \alpha \mathcal{R}(V) - \tfrac{1}{2}\| S(w) \|_F^2 \\
        &\mathop{\geq}^{\eqref{eq:sup_RV}} \tfrac{1}{2} \|S(w) - VV^T \|_F^2 - \alpha \|V \|_F^2 - \tfrac{1}{2}\| S(w) \|_F^2 \\
        &= \tfrac{1}{2} \|\alpha I_n + S(w) - VV^T \|_F^2 - \alpha \mathrm{Tr}(S(w)) \\
            &\qquad - \tfrac{\alpha^2}{2} \|I_n \|_F^2 - \tfrac{1}{2}\| S(w) \|_F^2 \\
        &\geq - \alpha \mathrm{Tr}(S(w)) - \tfrac{\alpha^2}{2} \|I_n \|_F^2 - \tfrac{1}{2}\| S(w) \|_F^2 \\
        &= - \alpha w_1 \mathrm{Tr}(A^{(1)}) - \tfrac{\alpha^2 n}{2} - \tfrac{1}{2}\| w \|_2^2 \\
        &\geq -\sqrt{n} - \tfrac{\alpha^2 n}{2} - \tfrac{n}{2}.
    \end{aligned}
    \end{equation}
    As a result, the objective function of \eqref{eq:model} converges to a finite value.
\end{proof}

\begin{theorem} \label{theorem:KKT}
    In Alg. \ref{alg:model}, the variable set $(V, w, p)$ converges to a stationary point that satisfies the KKT conditions.
\end{theorem}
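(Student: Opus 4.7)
The plan is to write down the KKT conditions for \eqref{eq:model} explicitly, establish boundedness of the iterates, and then show that every accumulation point of $\{(V^{(b)}, w^{(b)}, p^{(b)})\}$ satisfies those conditions by passing to the limit in the per-block optimality systems. Concretely, I would introduce multipliers $\Lambda_V \geq 0$, $\Lambda_w, \Lambda_p \geq 0$ for the nonnegativity constraints, scalar multipliers $\lambda_w, \lambda_p$ for $w^T 1_n = p^T 1_n = 1$, and $\gamma$ for $w^T p = 0$, write the stationarity equations
\begin{equation*}
\nabla_V \mathcal{L} = 0,\quad \nabla_w \mathcal{L} = 0,\quad \nabla_p \mathcal{L} = 0,
\end{equation*}
and the complementary slackness $\Lambda_V \odot V = 0$, $\Lambda_w \odot w = 0$, $\Lambda_p \odot p = 0$ as the target conditions. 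Boundedness of $w^{(b)}, p^{(b)}$ is immediate from the simplex, and boundedness of $V^{(b)}$ follows from Theorem \ref{theorem:convergence_loss} together with the coercivity bound $\tfrac{1}{2}\|S(w) - VV^T\|_F^2 - \alpha \mathcal{R}(V) \geq \tfrac{1}{2}\|VV^T\|_F^2 - \|S(w)\|_F \|VV^T\|_F - \alpha\|V\|_F^2$, so by Bolzano–Weierstrass a convergent subsequence exists.

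Next, I would argue blockwise. For the $(w,p)$ update, the relaxed problem \eqref{eq:model_wp} is $(\mu-\eta)$-strongly convex and is solved to its unique global optimum, so the KKT conditions for \eqref{eq:model_wp} hold exactly at every iteration; invoking Proposition \ref{proposition:opt} (plus the stated operating regime $\eta = 0.99\mu$ under which $\sum_k w^{\ast}_k p^{\ast}_k = 0$), these translate into the KKT conditions of the original $(w,p)$ subproblem \eqref{eq:model_wp1} with $\gamma = +\infty$ absorbed into the feasibility. For $V$, since each column subproblem \eqref{eq:model_vj} is a rank-one SymNMF and the PHALS update provides its closed-form HALS step, the resulting $v_j^{(b+1)}$ satisfies the column-wise KKT condition
\begin{equation*}
v_j \odot \nabla_{v_j}\bigl[\tfrac{1}{2}\|B_j - v_j v_j^T\|_F^2\bigr] = 0,\quad v_j \geq 0,
\end{equation*}
where $B_j = S(w) - \beta D(p) + \alpha M_j - V_{-j} V_{-j}^T$. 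Since $M_j$ depends only on $V_{-j}$, the gradient with respect to $v_j$ of the full objective in \eqref{eq:model_V2} coincides with that of \eqref{eq:model_vj}, so these are in fact the exact KKT conditions of the full $V$-subproblem restricted to $v_j$.

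The gluing step is to show that successive differences vanish, $\|V^{(b+1)} - V^{(b)}\|_F + \|w^{(b+1)} - w^{(b)}\|_2 + \|p^{(b+1)} - p^{(b)}\|_2 \to 0$. Strong convexity of the $(w,p)$ subproblem furnishes
\begin{equation*}
f(w^{(b)}, p^{(b)}) - f(w^{(b+1)}, p^{(b+1)}) \geq \tfrac{\mu-\eta}{2}\bigl(\|w^{(b+1)} - w^{(b)}\|_2^2 + \|p^{(b+1)} - p^{(b)}\|_2^2\bigr),
\end{equation*}
while the column-wise PHALS descent yields an analogous quadratic sufficient-decrease bound for each $v_j$. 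Combined with Theorem \ref{theorem:convergence_loss} (finite total decrease), telescoping forces both differences to zero. Then, extracting a convergent subsequence $(V^{(b_k)}, w^{(b_k)}, p^{(b_k)}) \to (V^{\ast}, w^{\ast}, p^{\ast})$ and using continuity of the per-block KKT residuals (the pseudo-inverse $V_{-j} V_{-j}^{\dag}$ is continuous at full-rank limit points, which follows from $\mathcal{R}(v_j) > 0$ under the proposed regularization), the limit inherits all stationarity and complementary slackness conditions of \eqref{eq:model}.

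The main obstacle I anticipate is twofold. First, justifying the sufficient-decrease inequality for the PHALS $V$-update rigorously — Lemma \ref{lemma2} only asserts monotone decrease, so either a direct Taylor-expansion bound exploiting the strong convexity of each rank-one subproblem in $v_j$, or a careful invocation of the PHALS convergence theory of \cite{Hou2023APrgressiveHA}, is needed. Second, the constraint $w^T p = 0$ is enforced only through the penalty $\eta w^T p$, so closing the gap between the KKT system of \eqref{eq:model_wp} and that of \eqref{eq:model} at the limit relies on the empirical complementarity $\sum_k w^{\ast}_k p^{\ast}_k = 0$ guaranteed by Proposition \ref{proposition:opt}; a fully rigorous statement may therefore have to be qualified by assuming this complementarity holds in the limit, which is the condition observed in practice.
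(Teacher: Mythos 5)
Your plan matches the paper's proof in its essential mechanism: the $(w,p)$ block is dispatched by noting it is solved to the unique global optimum of the strongly convex relaxation \eqref{eq:model_wp}, and the $V$ block is handled by the quantitative sufficient-decrease inequality of PHALS (Theorem 5 of \cite{Hou2023APrgressiveHA}) --- precisely the ``careful invocation of the PHALS convergence theory'' you anticipated as the missing ingredient beyond Lemma \ref{lemma2}. The one notable difference is that the paper short-circuits your subsequence/continuity-of-$V_{-j}V_{-j}^{\dag}$ step (the shakiest part of your plan, since $\mathcal{R}(v_j)>0$ is not guaranteed to persist at the limit): it identifies the columnwise KKT residual directly with the PHALS auxiliary variable via $\|v_j\|_2^2\,d = \|v_j\|_2^2\left(\max(Mv_j/\|v_j\|_2^2,0)-v_j\right)$, so the bound $f(v_j^{b})-f(v_j^{b+1}) \geq 2\|v_j^b\|^4\|d\|^2/(\cdots)$ together with Theorem \ref{theorem:convergence_loss} forces that residual to zero with no limit-point extraction needed; as for the $w^Tp=0$ penalty-relaxation gap you flag, the paper is in fact silent on it and implicitly relies on the same empirically observed complementarity, so your qualification is, if anything, more candid than the published argument.
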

\begin{proof}
    Once $V$ converges to a stationary point (i.e., $V^{(b+1)} = V^{(b)}$), $(w, p)$ is natural a stationary point that meets the KKT conditions, since $(w, p)$ is the global optimum of \eqref{eq:model_wp}.
    Therefore, the key is to prove that each $v_j$ converges to a stationary point that meets the KKT conditions.
    Denoting $M = S(w) - \beta D(p) + \alpha (I_n - V_{-j} V_{-j}^{\dag}) - V_{-j} V_{-j}^T$, the KKT conditions w.r.t. $v_j$ are:
    \begin{equation}
        \begin{cases}
            2 v_j v_j^T v_j - 2 M v_j - \lambda = 0 \\
            v_j \geq 0, \quad \lambda \geq 0, \quad v_j \odot \lambda = 0
        \end{cases},
    \end{equation}
    where $\lambda$ is the Lagrange multiplier w.r.t. the constraint $v_j \geq 0$.
    It is easy to verify that the KKT conditions are equivalent to
    \begin{equation}
        \|v_j \|_2^2 \left( v_j - \max(M v_j / \|v_j \|_2^2, 0) \right) = -\|v_j \|_2^2 d = 0,
        \label{eq:KKT_condition}
    \end{equation}
    where $d = \max(M v_j / \|v_j \|_2^2, 0) - v_j$ is an auxiliary variable in the PHALS algorithm.
    When $v_j = 0_n$, $d = 0_n$.

    According to the Theorem 5 in \cite{Hou2023APrgressiveHA}, after an one-step updateing of $v_j$ (i.e., line 10 in Alg. \ref{alg:model}), the objective function of \eqref{eq:model_vj} $f(v_j)$ satisfies:
    \begin{equation}
        f(v_{j}^{b}) \!-\! f(v_{j}^{b+1}) \geq \frac{2 \|v_j^b \|^4 \|d \|^2}{\|d \|^2 \!+\! 3 \|d \| \!\cdot\! \|v_j^b \| \!+\! 3 \|v_j^b \|^2 \!+\! \|M \|_F},
    \end{equation}
    where $b$ is the iteration count.
    Moreover, Theorem \ref{theorem:convergence_loss} shows that $\lim\limits_{b \rightarrow +\infty} f(v_j^{b}) - f(v_j^{b+1}) = 0$, which implies that:
    \begin{equation}
        \lim_{b \rightarrow +\infty} \|v_j^b \|^4 \|d \|^2 = 0,
    \end{equation}
    i.e., the KKT condition \eqref{eq:KKT_condition}.
\end{proof}

\begin{algorithm}[t] \caption{Algorithm of the proposed model} \label{alg:model}
\KwIn{Slices matrice $A^{(k)} \in \R^{n \times n}, k = 1, \dots, n$,\newline
    number of classes $r$, \newline
    hyper-parameters $\alpha, \beta, \mu$.
}
\KwOut{$V, w, p$}
Initialize $k_0 = \lfloor \log_2(n) \rfloor + 1$, $\eta = 0.99 \mu$, $w = p = \text{zeros}(n, 1)$, $V_0 = \text{rand}(n, r)$. \\
$w_{1:k_0} = 1 / k_0, \quad p_{k_0+1:n} = 1 / (n - k_0)$. \\
Calculate $S$ and $D$ by \eqref{eq:S_w} and \eqref{eq:D_p}, respectively. \\
Initialize $V$ by using PHALS($S$, $V_0$) (Alg. 2, \cite{Hou2023APrgressiveHA}). \\
\While{Not convergence}{
    \For{$j = 1, \dots, r$}{
        $V_{-j} = [v_1, \dots, v_{j-1}, v_{j+1}, \dots, v_{r}]$. \\
        Calculate $V_{-j}^{\dag}$ via the reduced SVD \cite{Vasudevan2017AHS}. \\
        $M = S - \beta D + \alpha (I_n - V_{-j} V_{-j}^{\dag}) - V_{-j} V_{-j}^T$ \\
        Updating $v_j$ by using rank-one PHALS($M$, $v_j$).
    }
    \For{$t = 1, \dots, 20$}{
        Updating $w$ by solving \eqref{eq:min_w} (Alg. 1, \cite{Wang2013ProjectionOT}). \\
        Updating $p$ by solving \eqref{eq:min_p} (Alg. 1, \cite{Wang2013ProjectionOT}).
    }
    Calculate $S$ and $D$ by \eqref{eq:S_w} and \eqref{eq:D_p}, respectively.
}
\end{algorithm}

\subsection{Comnputational Complexity Analysis}

The main complexity of Alg. \ref{alg:model} lies in the while-loop, which is analyzed line by line as follows:

In line 8, $V_{-j}^{\dag}$ is calculated via the reduced SVD \cite{Vasudevan2017AHS}, which requires $\mathcal{O}(n (r - 1)^2)$ complexity.

From line 9 to line 10, the PHALS algorithm does not use $M$ directly, but only needs to calculate $M v_j$, i.e.,
\begin{equation}
    M v_j = (S - \beta D) v_j + \alpha v_j - V_{-j} \left[\left(\alpha V_{-j}^{\dag} + V_{-j}^T \right) v_j \right],
\end{equation}
which requires only $\mathcal{O}(n^2 + n (r-1))$ complexity.
Therefore, the complexity from line 6 to line 10 is $\mathcal{O}(n^2 r + n r^2)$.

According to \cite{Wang2013ProjectionOT}, the complexities of line 12 and line 13 are $\mathcal{O}(n \log n)$.

In line 14, \eqref{eq:S_w} and \eqref{eq:D_p} involve the summation of $n$ $\R^{n \times n}$ matrices.
However, each $A^{(k)}$ is a sparse matrix with only $n$ non-zero elements. 
Due to this sparsity, \eqref{eq:S_w} and \eqref{eq:D_p} require only $\mathcal{O}(n^2)$ computational complexity.

In summary, each iteration's complexity of the proposed algorithm is $\mathcal{O}(n^2 r + n r^2 + n \log n)$.

\section{Experiment} \label{sec:5}
\subsection{Experiment Settings}
We compared the proposed method with five fixed similarity based methods: SymNMF \cite{Kuang2015SymNMF, Kuang2012SymmetricNM}, ANLS \cite{Zhu2018DroppingSF}, sBSUM \cite{Shi2017InexactBC}, PHALS \cite{Hou2023APrgressiveHA}, GSNMF \cite{Gao2018GraphRS}; and four state-of-the-art adaptive similarity based methods: NMFAN \cite{Huang2017NonnegativeMF}, CGSymNMF \cite{Jia2020ClusteringAG}, RBSMF \cite{Wang2022RobustBS}, S$^{3}$NMF \cite{Jia2022S3NMF}.
For all methods, we adopted the same initial $V$ and $k$-NN graph $S_0$, where elements in $V$ were uniformly sampled in the range of $[0, 1]$, and $S_0$ was constructed according to \eqref{eq:S_knn} with $k = \lfloor \log_2(n) \rfloor + 1$ as suggested by \cite{Luxburg2004ATO} and the kernel function $\kappa(x_i, x_j)$ defined by the self-tuning method \cite{ZelnikManor2004SelfTuningSC}.
To generate the clustering result of the proposed model, we performed the spectral clustering method \cite{AY2002OnSC} on the augmented similarity matrix $Z$ (defined in \eqref{eq:aug_affinity}) 20 times, and reported the average performance.

\begin{table}[!t]
    \centering
    \caption{Details of Datasets}
    \begin{tabular}{cccc}
    \toprule
    Dataset     & Dimension         & \# Sample ($n$) & \# Cluster ($r$) \\
    \midrule
    SEEDS       & $7$               & $210$     & $3$   \\
    YaleB       & $32 \times 32$    & $165$     & $15$ \\
    ORL         & $32 \times 32$    & $400$     & $40$ \\
    CHART       & $60$              & $600$     & $6$ \\
    USPS-1000   & $16 \times 16$    & $1000$    & $10$ \\
    COIL20      & $32 \times 32$    & $1440$    & $20$ \\
     NIST-2000  & $28 \times 28$    & $2000$    & $10$ \\
    Semeion     & $16 \times 16$    & $1593$    & $10$ \\
    \bottomrule
    \end{tabular}
    \label{tab:datasets}
\end{table}

For fair comparison, the hyper-parameters of ANLS, NMFAN, CGSymNMF, RBSMF and S$^{3}$NMF were exhaustively searched in the grid provided in the original papers.
For GNMF and GSNMF, hyper-parameters were tuned in the range of $\mathrm{logspace}(-2, 4, 50)$, i.e., $50$ numbers spaced evenly on a log scale from $10^{-2}$ to $10^{4}$.
For SymNMF, PHALS and sBSUM, there are no hyper-parameters to tune.
For the proposed model, $\alpha$, $\beta$ and $\mu$ were tuned in the range of $\{0.01, 0.03, 0.07, 0.1, 0.3, 0.7, 1\}, \{1, 5, 10, 50, 100, 500, 1000 \}$ and $\{0.05, 0.07, 0.1, 0.3, 0.5, 0.7, 1 \}$, respectively.
Then, we reported their average performance and the associated standard deviation (std) of 20 repetitions.
We evaluated all the methods on eight commonly used datasets. 
The detailed information about these datasets is summarized in Table \ref{tab:datasets}.

We adopted two commonly used metrics, namely clustering accuracy (ACC) and normalized mutual information (NMI), to evaluate all the methods.
Both values of ACC and NMI lie in the range of $[0, 1]$, and the larger, the better.

\subsection{Comparisons of Clustering Performance}
\begin{table*}[!t]
    \centering
    \tabcolsep=1.2mm
    \caption{Clustering Performance on Each Dataset.}
    \label{tab:clustering}
    \begin{threeparttable}
    \begin{tabular}{c|cccccccc}
    \toprule
    ACC & SEEDS & YaleB & ORL & CHART & USPS-1000 & COIL20 & MNIST-2000 & Semeion \\
    \midrule
    SymNMF                      & $0.658 \pm 0.126$ & $0.458 \pm 0.018$ & $0.616 \pm 0.022$ & $0.632 \pm 0.059$ & $0.514 \pm 0.041$ & $0.503 \pm 0.038$ & $0.535 \pm 0.042$ & $0.547 \pm 0.066$ \\
    ANLS                        & $0.753 \pm 0.132$ & $0.459 \pm 0.017$ & $0.614 \pm 0.013$ & $0.643 \pm 0.056$ & $0.531 \pm 0.044$ & $0.557 \pm 0.037$ & $0.567 \pm 0.053$ & $0.576 \pm 0.052$ \\
    sBSUM                       & $0.840 \pm 0.029$ & $0.461 \pm 0.013$ & $0.620 \pm 0.014$ & $0.595 \pm 0.057$ & $0.562 \pm 0.047$ & $0.644 \pm 0.066$ & $0.565 \pm 0.067$ & $0.615 \pm 0.060$ \\
    PHALS                       & $0.789 \pm 0.120$ & \underline{$0.467 \pm 0.015$} & $0.621 \pm 0.015$ & $0.646 \pm 0.062$ & $0.532 \pm 0.041$ & $0.658 \pm 0.043$ & $0.603 \pm 0.051$ & $0.594 \pm 0.043$ \\
    GSNMF                       & $0.647 \pm 0.092$ & $0.455 \pm 0.024$ & $0.620 \pm 0.017$ & $0.647 \pm 0.087$ & $0.498 \pm 0.050$ & $0.504 \pm 0.034$ & $0.493 \pm 0.043$ & $0.495 \pm 0.038$ \\
    NMFAN$^{\ddagger}$     & $0.876 \pm 0.008$ & $0.450 \pm 0.017$ & \underline{$0.678 \pm 0.007$} & $0.627 \pm 0.075$ & $0.463 \pm 0.029$ & $0.660 \pm 0.016$ & $0.516 \pm 0.044$ & $0.611 \pm 0.044$ \\
    CGSymNMF$^{\ddagger}$  & $0.755 \pm 0.109$ & $0.459 \pm 0.032$ & $0.621 \pm 0.025$ & $0.661 \pm 0.080$ & $0.487 \pm 0.042$ & $0.591 \pm 0.048$ & $0.506 \pm 0.042$ & $0.480 \pm 0.037$ \\
    RBSMF$^{\ddagger}$     & \pmb{$0.897 \pm 0.012$} & $0.462 \pm 0.014$ & $0.645 \pm 0.007$ & $0.568 \pm 0.000$ & $0.571 \pm 0.002$ & \underline{$0.800 \pm 0.000$} & \underline{$0.665 \pm 0.011$} & $0.637 \pm 0.041$ \\
    S$^{3}$NMF$^{\ddagger}$          & $0.835 \pm 0.041$ & $0.450 \pm 0.008$ & $0.617 \pm 0.007$ & \pmb{$0.829 \pm 0.016$} & \pmb{$0.650 \pm 0.021$} & $0.781 \pm 0.015$ & $0.664 \pm 0.020$ & \underline{$0.704 \pm 0.017$} \\
    proposed$^{\ddagger}$       & \underline{$0.887 \pm 0.016$} & \pmb{$0.539 \pm 0.020$} & \pmb{$0.683 \pm 0.014$} & \underline{$0.733 \pm 0.057$} & \underline{$0.610 \pm 0.027$} & \pmb{$0.833 \pm 0.015$} & \pmb{$0.698 \pm 0.023$} & \pmb{$0.734 \pm 0.055$} \\
    \bottomrule
    \toprule
    NMI & SEEDS & YaleB & ORL & CHART & USPS-1000 & COIL20 & MNIST-2000 & Semeion \\
    \midrule
    SymNMF                      & $0.400 \pm 0.147$ & $0.522 \pm 0.017$ & $0.789 \pm 0.010$ & $0.710 \pm 0.066$ & $0.529 \pm 0.030$ & $0.689 \pm 0.021$ & $0.523 \pm 0.026$ & $0.553 \pm 0.043$ \\
    ANLS                        & $0.551 \pm 0.143$ & $0.526 \pm 0.012$ & $0.787 \pm 0.007$ & $0.767 \pm 0.041$ & $0.553 \pm 0.028$ & $0.743 \pm 0.027$ & $0.562 \pm 0.042$ & $0.585 \pm 0.036$ \\
    sBSUM                       & $0.646 \pm 0.029$ & $0.527 \pm 0.009$ & $0.791 \pm 0.007$ & $0.784 \pm 0.040$ & $0.578 \pm 0.030$ & $0.833 \pm 0.026$ & $0.620 \pm 0.041$ & $0.632 \pm 0.026$ \\
    PHALS                       & $0.580 \pm 0.141$ & $0.530 \pm 0.008$ & $0.789 \pm 0.007$ & $0.790 \pm 0.028$ & $0.568 \pm 0.023$ & $0.819 \pm 0.024$ & $0.613 \pm 0.036$ & $0.617 \pm 0.022$ \\
    GSNMF                       & $0.392 \pm 0.091$ & $0.522 \pm 0.017$ & $0.786 \pm 0.009$ & $0.714 \pm 0.053$ & $0.508 \pm 0.042$ & $0.678 \pm 0.025$ & $0.423 \pm 0.041$ & $0.488 \pm 0.030$ \\
    NMFAN$^{\ddagger}$     & $0.642 \pm 0.014$ & $0.501 \pm 0.014$ & \underline{$0.830 \pm 0.006$} & $0.626 \pm 0.025$ & $0.445 \pm 0.037$ & $0.756 \pm 0.017$ & $0.457 \pm 0.027$ & $0.554 \pm 0.031$ \\
    CGSymNMF$^{\ddagger}$  & $0.548 \pm 0.096$ & $0.522 \pm 0.021$ & $0.800 \pm 0.013$ & $0.704 \pm 0.065$ & $0.495 \pm 0.041$ & $0.769 \pm 0.031$ & $0.501 \pm 0.036$ & $0.465 \pm 0.035$ \\
    RBSMF$^{\ddagger}$     & \underline{$0.687 \pm 0.021$} & \underline{$0.534 \pm 0.003$} & $0.806 \pm 0.004$ & $0.801 \pm 0.001$ & \pmb{$0.627 \pm 0.002$} & \underline{$0.892 \pm 0.000$} & \underline{$0.654 \pm 0.008$} & $0.649 \pm 0.016$ \\
    S$^{3}$NMF$^{\ddagger}$          & $0.647 \pm 0.055$ & $0.512 \pm 0.006$ & $0.787 \pm 0.004$ & \pmb{$0.809 \pm 0.017$} & $0.625 \pm 0.016$ & $0.860 \pm 0.007$ & $0.636 \pm 0.015$ & \underline{$0.665 \pm 0.011$} \\
    proposed$^{\ddagger}$       & \pmb{$0.709 \pm 0.021$} & \pmb{$0.586 \pm 0.012$} & \pmb{$0.846 \pm 0.006$} & \pmb{$0.809 \pm 0.011$} & \pmb{$0.627 \pm 0.015$} & \pmb{$0.923 \pm 0.012$} & \pmb{$0.691 \pm 0.011$} & \pmb{$0.678 \pm 0.021$} \\
    \bottomrule
    \end{tabular}
    \begin{tablenotes}
        \footnotesize
        \item[$\ddagger:$] Adaptive similarity methods.
    \end{tablenotes}
    \end{threeparttable}
\end{table*}

Table \ref{tab:clustering} shows the clustering performance of all the methods on each dataset, where the best performance under each metric is marked by bold, and the second best is underlined.
From Table \ref{tab:clustering}, we can observe that:
\begin{itemize}
    \item Our method significantly outperforms the compared methods, achieving the highest ACC/NMI values in most cases ($13/16$) and the second highest in the rest ($3/16$).
    \item The adaptive similarity methods outperforms the fixed similarity methods. There is only one case where a fixed similarity method achieves the second-best performance: the ACC of PHALS on the YaleB dataset.
    \item Among the compared methods, RBSMF and S$^{3}$NMF have good performance. The reasons may be as follows: 
    In RBSMF, the similarity matrix is assumed to have a bi-stochastic structure, making it easier to learn a high-quality similarity matrix. Moreover, it adopts a robust loss function instead of the Forbenius norm in NMF to handle outliers. 
    In S$^{3}$NMF, the similarity matrix is iteratively boosted according to multiple clustering results, making it more robust to handle poor initial similarity matrix.
\end{itemize}

\subsection{Learned Coefficients of $w$ and $p$}
\begin{figure*}[!t]
    \centering
    \includegraphics[width=\linewidth]{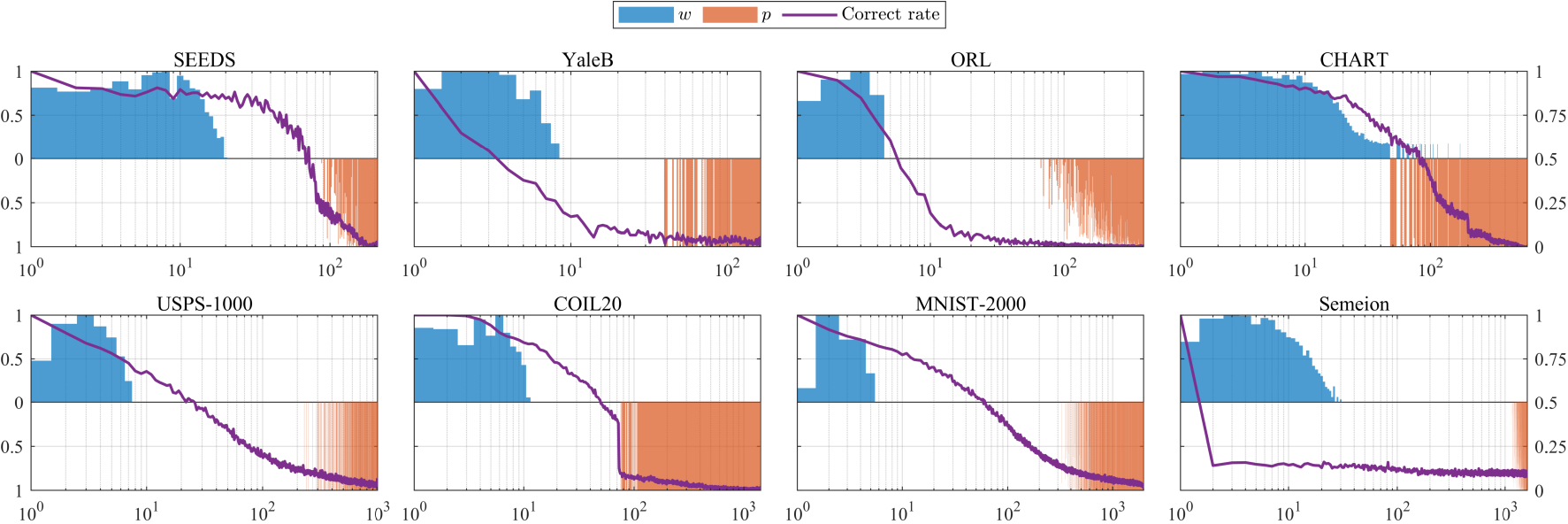}
    \caption{Correct rate and the learned $w$ and $p$ of each dataset. The $x$-axis represents the $k$-th nearest neighbors on a logarithmic scale. The upper and lower parts of the left $y$-axis represents the coordinates of $w$ and $p$ respectively, and the right y-axis is the coordinate of correct rate. For better view, $w$ and $p$ are normalized to the range $[0, 1]$.}
    \label{fig:wpACC}
\end{figure*}

The correct rate and the learned $w$ and $p$ of each dataset are shown in Fig. \ref{fig:wpACC}.
Generally, as $k$ increases, correct rate decreases.
This phenomenon causes $w$ and $p$ automatically learn $k$ nearest neighbors and $t$ farthest neighbors, for some $k$ and $t$, respectively, with the weights being consistent with the correct rate.
That is, the larger the correct rate, the larger the $w$ and the smaller the $p$. 
Moreover, $w$ and $p$ have no overlap, indicating that the condition in \eqref{eq:wp_condition} is satisfied, i.e., the non-convex problem \eqref{eq:model_wp1} can be solved equivalently by the strongly convex problem \eqref{eq:model_wp}.

\subsection{Hyper-parameters Analysis}
There are three hyper-parameters $\alpha, \beta$ and $\mu$ in the proposed model, which control the contributions of the orthogonality regularization, the dissimilarity regularization, and the density regularization, respectively.
In this subsection, we investigate their influence on the ACC of the proposed model in Fig. \ref{fig:hp}, we can be observed that:
\begin{itemize}
    \item A larger $\beta$ usually leads to higher ACCs, but at the same time, lower $\alpha$ often results in inferior ACCs. 
    This is because when $\beta \langle D(p), VV^T \rangle$ is dominant, it becomes easy to learn a $V$ of all zeros. 
    However, when $\alpha$ is large, $\|V \|_F^2$ is maximized, which helps avoid this undesirable solution.
    \item Taking into account the ACC of all datasets, we suggest $\mu = \alpha = 0.1, \beta = 10$ as default setting.
\end{itemize}

\begin{figure*}[!t]
    \centering
    \subfloat{\includegraphics[width=0.245\linewidth]{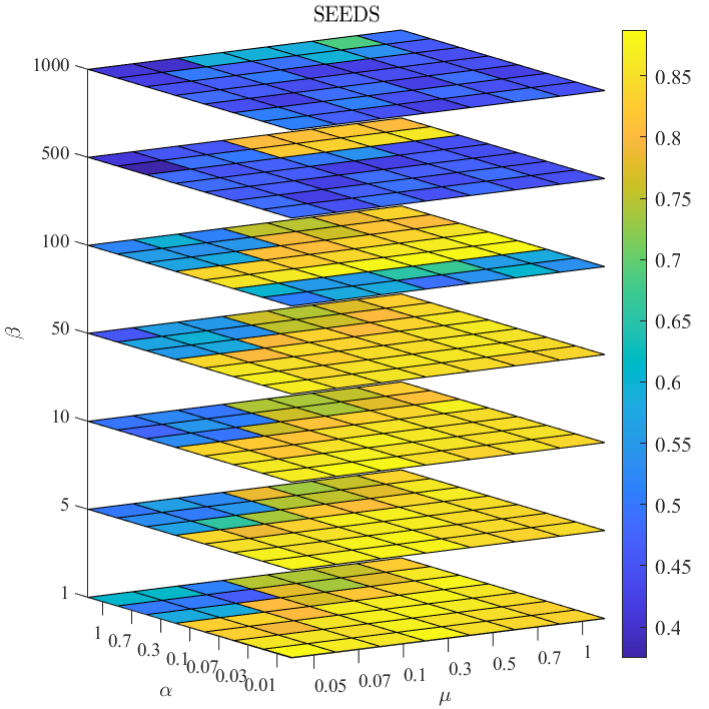}} \hfill
    \subfloat{\includegraphics[width=0.245\linewidth]{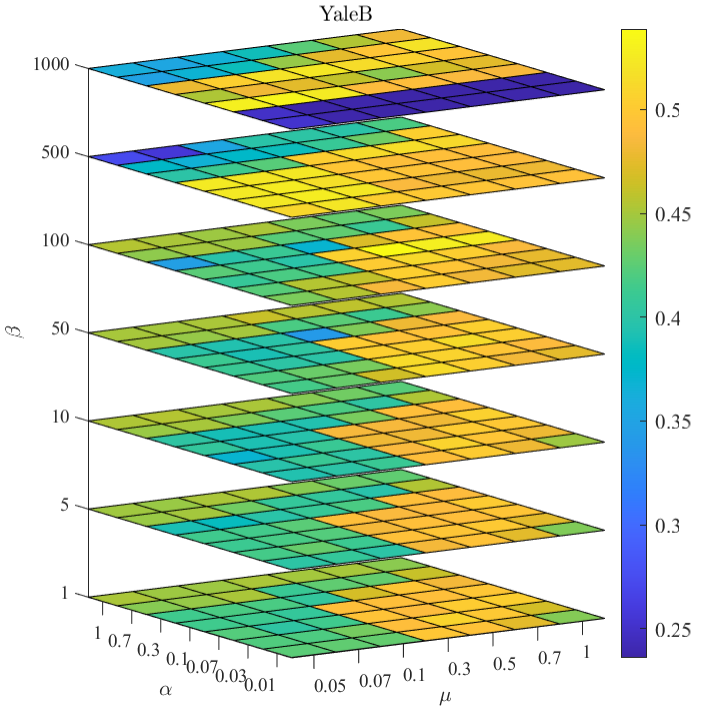}} \hfill
    \subfloat{\includegraphics[width=0.245\linewidth]{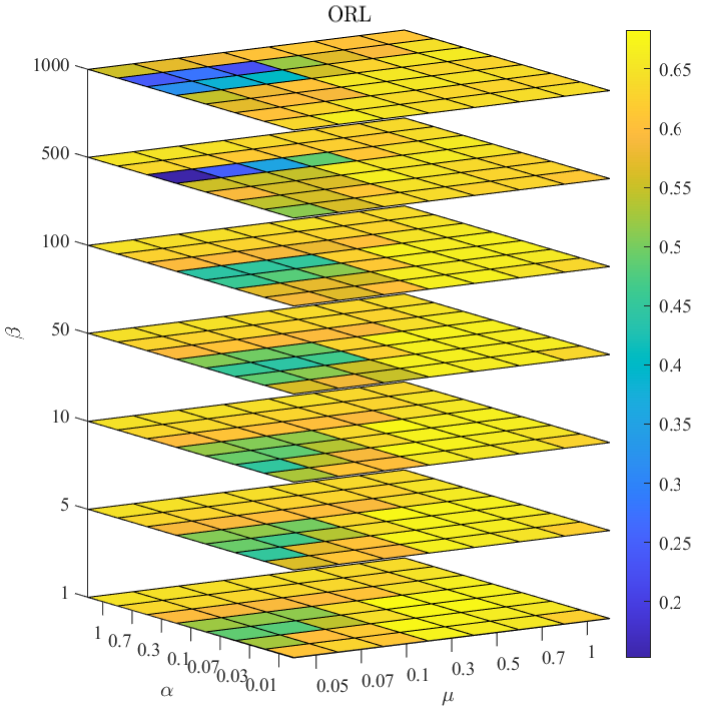}}   \hfill
    \subfloat{\includegraphics[width=0.245\linewidth]{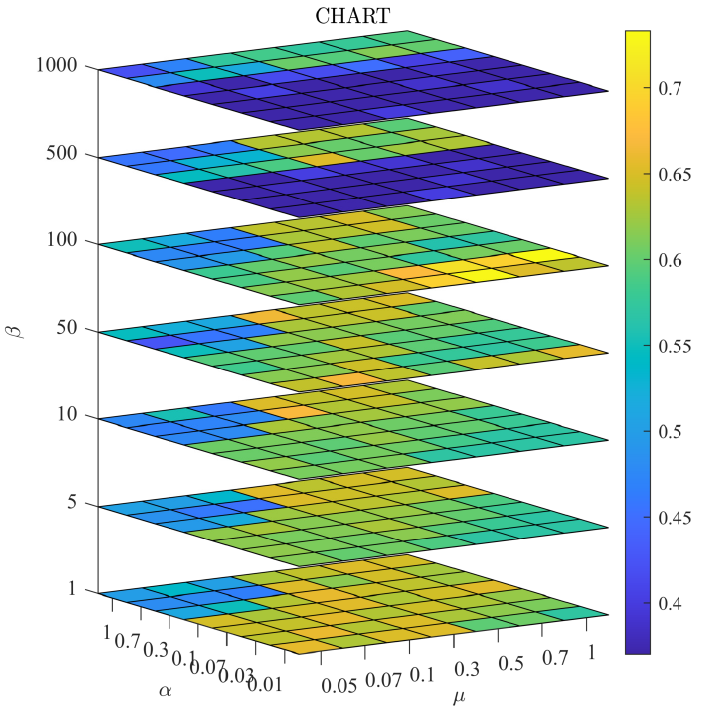}} \hfill \\
    \subfloat{\includegraphics[width=0.245\linewidth]{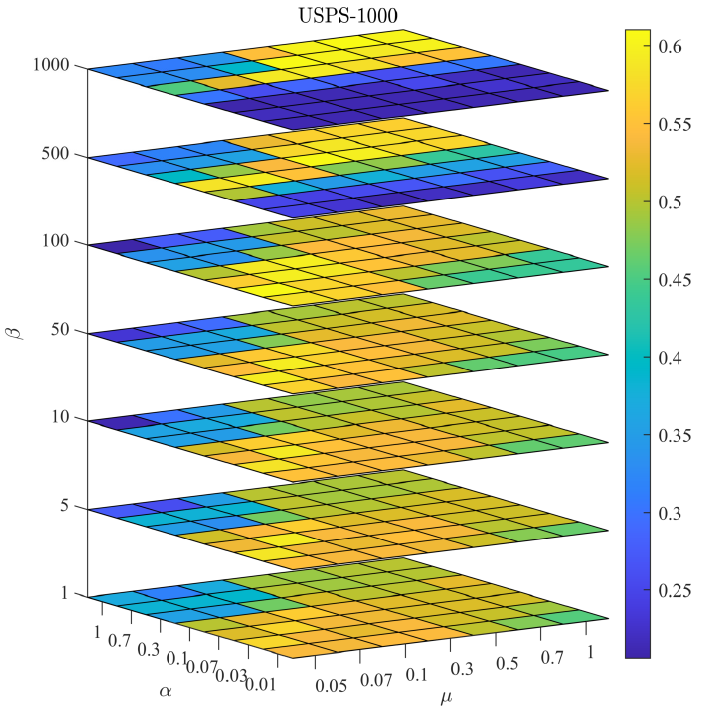}}  \hfill
    \subfloat{\includegraphics[width=0.245\linewidth]{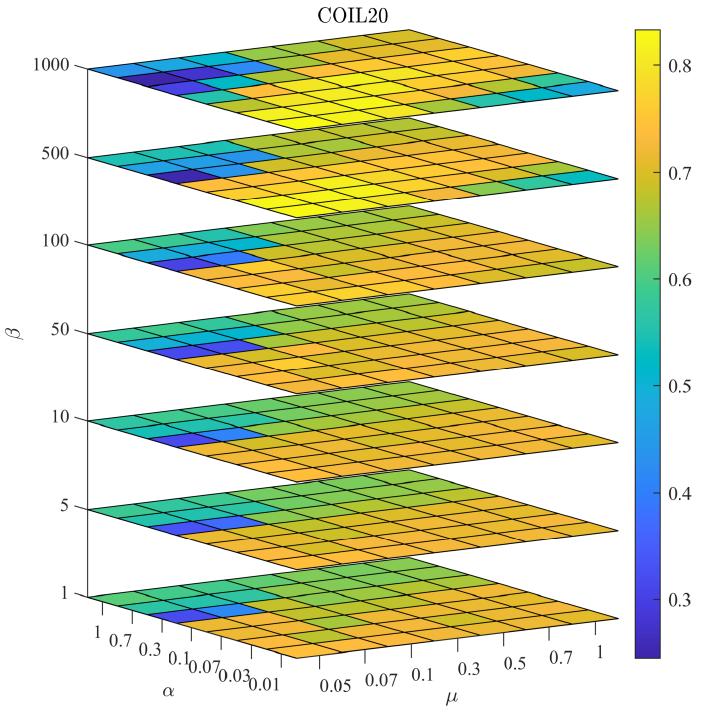}}\hfill
    \subfloat{\includegraphics[width=0.245\linewidth]{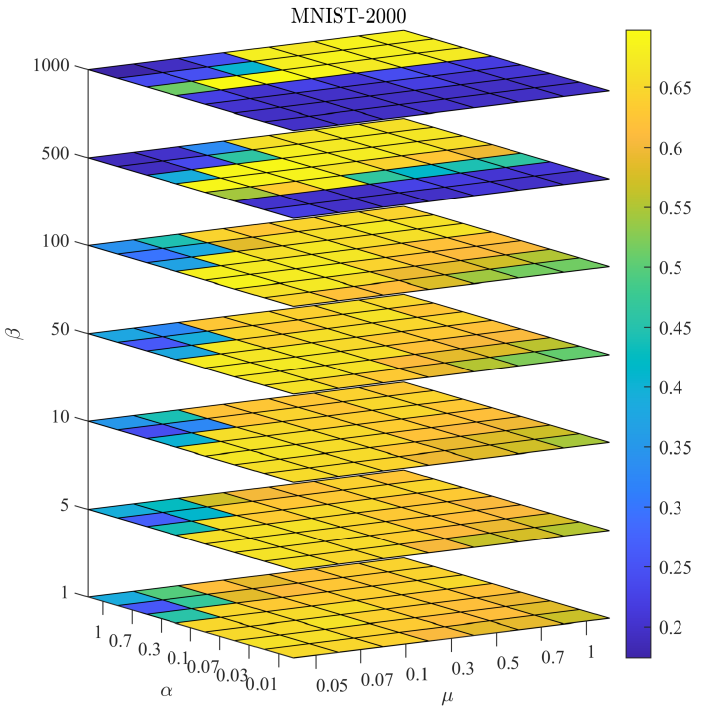}} \hfill
    \subfloat{\includegraphics[width=0.245\linewidth]{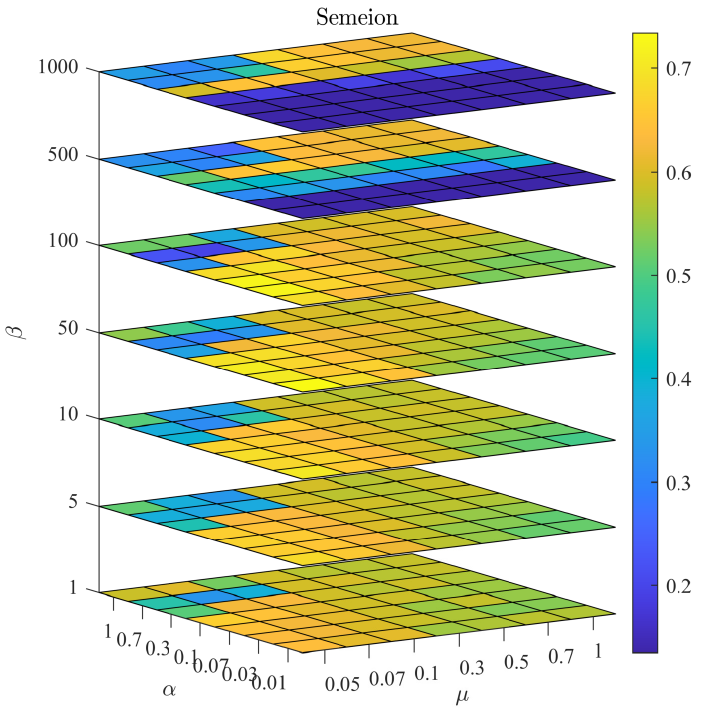}} \hfill \\
    \caption{Average values of ACC of the proposed model with different values of $\alpha$, $\beta$ and $\mu$. They are all 4-D figures, where the fourth direction is indicated by the color with the corresponding color bar.}
    \label{fig:hp}
\end{figure*}

\subsection{Ablation Study} \label{sec:ablation}
In this subsection, we evaluated the importance of different components of the proposed model.
Specifically, we remove the orthogonality regularization and the dissimilarity matrix $D(p)$ from the proposed model \eqref{eq:model}, as shown in \eqref{eq:model_woOrth} and \eqref{eq:model_woD}, respectively, and remove them both as shown in \eqref{eq:model_woBoth}.
\begin{equation}
\begin{aligned}
    &\min_{V, w, p \geq 0} \tfrac{1}{2} \|S(w) - VV^T \|_F^2 + \beta \langle D(p), VV^T \rangle \\ 
    &\qquad \qquad  + \tfrac{\mu - 1}{2} \| S(w) \|_F^2 + \tfrac{\mu}{2} \|D(p) \|_F^2 \\
    &\text{s.t.} \quad w^T 1_n = p^T 1_n = 1, w^T p = 0,
    \label{eq:model_woOrth}
\end{aligned}
\end{equation}
\begin{equation}
\begin{aligned}
    &\min_{V, w \geq 0} \tfrac{1}{2} \|S(w) - VV^T \|_F^2  - \alpha \mathcal{R}(V) + \tfrac{\mu}{2} \| S(w) \|_F^2  \\
    &\text{s.t.} \quad w^T 1_n = 1,
    \label{eq:model_woD}
\end{aligned}
\end{equation}
\begin{equation}
    \min_{V, w \geq 0} \tfrac{1}{2} \|S(w) \!-\! VV^T \|_F^2 + \tfrac{\mu}{2} \| S(w) \|_F^2 \; \text{s.t.} \; w^T 1_n = 1.
    \label{eq:model_woBoth}
\end{equation}
For a fair comparison, the hyperparamaters of \eqref{eq:model_woOrth}, \eqref{eq:model_woD} and \eqref{eq:model_woBoth} were carefully tuned on a larger and more dense grid.
Specifically, for \eqref{eq:model_woOrth}, $\mu$ was tuned in the range of $\{0.03, 0.05, 0.07, 0.1, 0.3, 0.5, 0.7, 1, 1.2 \}$, and $\beta$ was tuned in the range of $\{1, 5, 10, 30, 70, 100, 300, 700, 1000 \}$; in \eqref{eq:model_woD}, $\mu$ and $\alpha$ were tuned in the range of $\{0.1, 0.3, 0.5, 0.7, 1, 3, 5, 7, 10 \}$; in \eqref{eq:model_woBoth}, $\mu$ was tuned in the range of $\mathrm{logspace}(-2, 2, 50)$.

\begin{table*}[!t]
    \centering
    \tabcolsep=1.2mm
    \caption{Ablation study of the proposed model.}
    \label{tab:ablation}
    \begin{tabular}{ccc|cccccccc}
    \toprule
    & $D$ & Orth & SEEDS & YaleB & ORL & CHART & USPS-1000 & COIL20 & MNIST-2000 & Semeion \\
    \midrule
    \eqref{eq:model_woBoth} &            &            & $0.790 \pm 0.071$ & $0.513 \pm 0.007$ & $0.668 \pm 0.010$ & \pmb{$0.864 \pm 0.023$} & $0.412 \pm 0.018$ & $0.560 \pm 0.037$ & $0.487 \pm 0.003$ & $0.536 \pm 0.019$ \\
    \eqref{eq:model_woOrth} & \checkmark &            & \pmb{$0.896 \pm 0.021$} & $0.492 \pm 0.016$ & $0.635 \pm 0.005$ & $0.672 \pm 0.093$ & $0.502 \pm 0.040$ & $0.749 \pm 0.037$ & $0.657 \pm 0.044$ & $0.601 \pm 0.035$ \\
    \eqref{eq:model_woD}    &            & \checkmark & $0.878 \pm 0.030$ & \underline{$0.518 \pm 0.022$} & \underline{$0.672 \pm 0.010$} & $0.686 \pm 0.115$ & \underline{$0.608 \pm 0.020$} & \underline{$0.784 \pm 0.051$} & \underline{$0.661 \pm 0.038$} & \pmb{$0.760 \pm 0.033$} \\
    \eqref{eq:model}        & \checkmark & \checkmark & \underline{$0.887 \pm 0.016$} & \pmb{$0.539 \pm 0.020$} & \pmb{$0.683 \pm 0.014$} & \underline{$0.733 \pm 0.057$} & \pmb{$0.610 \pm 0.027$} & \pmb{$0.833 \pm 0.015$} & \pmb{$0.698 \pm 0.023$} & \underline{$0.734 \pm 0.055$} \\
    \bottomrule
    \toprule
    & $D$ & Orth & SEEDS & YaleB & ORL & CHART & USPS-1000 & COIL20 & MNIST-2000 & Semeion \\
    \midrule
    \eqref{eq:model_woBoth} &            &            & $0.545 \pm 0.090$ & $0.554 \pm 0.061$ & $0.798 \pm 0.002$ & \pmb{$0.824 \pm 0.019$} & $0.386 \pm 0.013$ & $0.727 \pm 0.030$ & $0.441 \pm 0.007$ & $0.493 \pm 0.019$ \\
    \eqref{eq:model_woOrth} & \checkmark &            & \pmb{$0.722 \pm 0.028$} & $0.544 \pm 0.015$ & $0.830 \pm 0.005$ & $0.807 \pm 0.012$ & $0.531 \pm 0.032$ & $0.887 \pm 0.017$ & $0.623 \pm 0.030$ & $0.621 \pm 0.021$ \\
    \eqref{eq:model_woD}    &            & \checkmark & $0.695 \pm 0.035$ & \underline{$0.575 \pm 0.013$} & \underline{$0.832 \pm 0.005$} & $0.800 \pm 0.007$ & \underline{$0.618 \pm 0.022$} & \underline{$0.888 \pm 0.023$} & \underline{$0.658 \pm 0.015$} & \pmb{$0.682 \pm 0.015$} \\
    \eqref{eq:model}        & \checkmark & \checkmark & \underline{$0.709 \pm 0.021$} & \pmb{$0.586 \pm 0.012$} & \pmb{$0.846 \pm 0.006$} & \underline{$0.809 \pm 0.011$} & \pmb{$0.627 \pm 0.015$} & \pmb{$0.923 \pm 0.012$} & \pmb{$0.691 \pm 0.011$} & \underline{$0.678 \pm 0.021$} \\
    \bottomrule
    \end{tabular}
\end{table*}

The clustering performances of these models are shown in Table \ref{tab:ablation}.
Comparing the results of \eqref{eq:model_woOrth} and \eqref{eq:model_woBoth} as well as \eqref{eq:model} and \eqref{eq:model_woD}, it can be seen that the dissimilarity matrix can improve the clustering performance.
Similarly, comparing the results of \eqref{eq:model_woD} and \eqref{eq:model_woBoth} as well as \eqref{eq:model} and \eqref{eq:model_woOrth}, it can be seen that the proposed orthogonality regularization can significantly improve the clustering performance.
These two improvements are especially evident on the COIL20, MNIST-2000 and Semeion datasets.
It may be because the number of samples $n$ in these datasets is relatively large, making it difficult to search for high-quality $S$ in an $n$-dimensional space.
By introducing $D$ and enhancing the orthogonality regularization of $V$, the discriminative ability of $S$ is improved, which makes $S$ easier to learn.

\subsection{Advantage of the proposed orthogonality regularization}
In this subsection, we analyzed the advantage of the proposed orthogonality regularization $\mathcal{R}(V)$ compared with the aforementioned $\mathcal{R}_{\text{off-diag}} = -\sum_{i=1}^{r} \sum_{j \ne i} v_i^T v_j$ \cite{Li2015TwoEA} and $\mathcal{R}_{\text{log-det}} = \log\det(V^T V)$ \cite{Liu2017LargeCN}.
After replacing $\mathcal{R}(V)$ in \eqref{eq:model} with $\mathcal{R}_{\text{off-diag}}$ and $\mathcal{R}_{\text{log-det}}$, multiplicative update (MU) algorithm is applied to solve the models.
As a controlled experiment, we also apply MU to solve \eqref{eq:model} using our proposed regularization.
Moreover, the model \eqref{eq:model_woOrth} without orthogonality regularization is used as a blank control reference.

\begin{table*}[!t]
    \centering
    \tabcolsep=1.2mm
    \caption{Clustering performance of the proposed model with different $\mathcal{R}(V)$.}
    \label{tab:ablation_model_orth}
    \begin{tabular}{c|cccccccc}
    \toprule
    ACC & SEEDS & YaleB & ORL & CHART & USPS-1000 & COIL20 & MNIST-2000 & Semeion \\
    \midrule
    Model \eqref{eq:model_woOrth}       & \underline{$0.878 \pm 0.030$} & \underline{$0.518 \pm 0.022$} & \underline{$0.672 \pm 0.010$} & \underline{$0.686 \pm 0.115$} & \underline{$0.608 \pm 0.020$} & \underline{$0.784 \pm 0.051$} & $0.661 \pm 0.038$ & \pmb{$0.760 \pm 0.033$} \\
    $\mathcal{R}_{\text{off-diag}}(V)$  & $0.853 \pm 0.054$ & $0.500 \pm 0.013$ & $0.666 \pm 0.015$ & $0.662 \pm 0.069$ & $0.517 \pm 0.041$ & $0.715 \pm 0.022$ & $0.656 \pm 0.037$ & $0.591 \pm 0.066$ \\
    $\mathcal{R}_{\text{log-det}}(V)$   & $0.846 \pm 0.072$ & $0.498 \pm 0.011$ & $0.652 \pm 0.016$ & $0.661 \pm 0.070$ & $0.565 \pm 0.038$ & $0.711 \pm 0.050$ & $0.678 \pm 0.031$ & $0.625 \pm 0.068$ \\
    $\mathcal{R}(V)$  by MU             & $0.870 \pm 0.017$ & $0.490 \pm 0.016$ & \underline{$0.672 \pm 0.015$} & $0.661 \pm 0.070$ & $0.569 \pm 0.034$ & $0.718 \pm 0.018$ & \underline{$0.679 \pm 0.029$} & $0.640 \pm 0.065$ \\
    Proposed $\mathcal{R}(V)$           & \pmb{$0.887 \pm 0.016$} & \pmb{$0.539 \pm 0.020$} & \pmb{$0.683 \pm 0.014$} & \pmb{$0.733 \pm 0.057$} & \pmb{$0.610 \pm 0.027$} & \pmb{$0.833 \pm 0.015$} & \pmb{$0.698 \pm 0.023$} & \underline{$0.734 \pm 0.055$} \\
    \bottomrule
    \toprule
    NMI & SEEDS & YaleB & ORL & CHART & USPS-1000 & COIL20 & MNIST-2000 & Semeion \\
    \midrule
    Model \eqref{eq:model_woOrth}       & \pmb{$0.722 \pm 0.028$} & $0.544 \pm 0.015$ & $0.830 \pm 0.005$ & \underline{$0.807 \pm 0.012$} & $0.531 \pm 0.032$ & \underline{$0.887 \pm 0.017$} & $0.623 \pm 0.030$ & $0.621 \pm 0.021$ \\
    $\mathcal{R}_{\text{off-diag}}(V)$  & $0.654 \pm 0.083$ & \underline{$0.555 \pm 0.010$} & $0.830 \pm 0.007$ & $0.799 \pm 0.015$ & $0.542 \pm 0.022$ & $0.831 \pm 0.027$ & $0.645 \pm 0.025$ & $0.604 \pm 0.040$ \\
    $\mathcal{R}_{\text{log-det}}(V)$   & $0.632 \pm 0.106$ & $0.552 \pm 0.009$ & $0.829 \pm 0.008$ & $0.800 \pm 0.021$ & $0.581 \pm 0.027$ & $0.833 \pm 0.027$ & $0.660 \pm 0.024$ & $0.623 \pm 0.035$ \\
    \eqref{eq:model} by MU              & $0.672 \pm 0.026$ & $0.552 \pm 0.011$ & \underline{$0.832 \pm 0.007$} & $0.802 \pm 0.019$ & \underline{$0.589 \pm 0.022$} & $0.833 \pm 0.026$ & \underline{$0.666 \pm 0.025$} & \underline{$0.631 \pm 0.035$} \\
    Proposed $\mathcal{R}(V)$           & \underline{$0.709 \pm 0.021$} & \pmb{$0.586 \pm 0.012$} & \pmb{$0.846 \pm 0.006$} & \pmb{$0.809 \pm 0.011$} & \pmb{$0.627 \pm 0.015$} & \pmb{$0.923 \pm 0.012$} & \pmb{$0.691 \pm 0.011$} & \pmb{$0.678 \pm 0.021$} \\
    \bottomrule
    \end{tabular}
\end{table*}

The clustering performances of these models are shown in Table \ref{tab:ablation_model_orth}.
Surprisingly, the performances of the MU-based methods are worse than model \eqref{eq:model_woOrth} without orthogonality regularization.
This is because MU-based methods are more sensitive to the initializations, as reflected in their larger std.
Nevertheless, benefiting from the effectiveness of the PHALS algorithm, the performance of the proposed $\mathcal{R}(V)$ achieves the best performance.
Moreover, among the MU-based methods, the proposed $\mathcal{R}(V)$ has a slight advantage.

Furthermore, the proposed orthogonality regularization can be applied in both NMF and SymNMF.
As an example, we propose a toy model named approximatively orthogonal SymNMF (AOSymNMF) as follows:
\begin{equation}
    \min_{V \geq 0} \frac{1}{2} \|S - VV^T \|_F^2 - \alpha \mathcal{R}(V).
    \label{eq:AOSymNMF}
\end{equation}
Similarly, we replaced $\mathcal{R}(V)$ in \eqref{eq:AOSymNMF} with $\mathcal{R}_{\text{off-diag}}$ and $\mathcal{R}_{\text{log-det}}$ and applied MU to solve them.
As a controlled experiment, we also applied MU to solve \eqref{eq:AOSymNMF} using our proposed regularization.
Additionally, we use SymNMF and PHALS as blank control references.
The hyper-parameter $\alpha$ in AOSymNMF is exhaustively searched in $\mathrm{logspace}(-2, 4, 50)$.

\begin{table*}[!t]
    \centering
    \tabcolsep=1.2mm
    \caption{Clustering performance of the AOSymNMF \eqref{eq:AOSymNMF} with different $\mathcal{R}(V)$.}
    \label{tab:ablation_AOSymNMF}
    \begin{tabular}{c|cccccccc}
    \toprule
    ACC & SEEDS & YaleB & ORL & CHART & USPS-1000 & COIL20 & MNIST-2000 & Semeion \\
    \midrule
    SymNMF                              & $0.658 \pm 0.126$ & $0.458 \pm 0.018$ & $0.616 \pm 0.022$ & $0.632 \pm 0.059$ & $0.514 \pm 0.041$ & $0.503 \pm 0.038$ & $0.535 \pm 0.042$ & $0.547 \pm 0.066$ \\
    PHALS                               & $0.753 \pm 0.132$ & $0.459 \pm 0.017$ & $0.614 \pm 0.013$ & \pmb{$0.643 \pm 0.056$} & $0.531 \pm 0.044$ & $0.557 \pm 0.037$ & $0.567 \pm 0.053$ & $0.576 \pm 0.052$ \\
    $\mathcal{R}_{\text{off-diag}}(V)$  & $0.672 \pm 0.115$ & \underline{$0.463 \pm 0.022$} & $0.619 \pm 0.011$ &\underline{$0.634 \pm 0.075$} & $0.537 \pm 0.049$ & $0.512 \pm 0.038$ & $0.549 \pm 0.046$ & $0.550 \pm 0.054$ \\
    $\mathcal{R}_{\text{log-det}}(V)$   & $0.722 \pm 0.114$ & $0.462 \pm 0.021$ & $\underline{0.628 \pm 0.024}$ & $0.620 \pm 0.046$ & $0.534 \pm 0.050$ & $0.549 \pm 0.027$ & $0.563 \pm 0.066$ & $0.573 \pm 0.055$ \\
    $\mathcal{R}(V)$ by MU              & \underline{$0.800 \pm 0.077$} & $0.462 \pm 0.018$ & $0.626 \pm 0.023$ & $0.615 \pm 0.092$ & \underline{$0.541 \pm 0.047$} & \underline{$0.562 \pm 0.035$} & \underline{$0.592 \pm 0.049$} & \underline{$0.587 \pm 0.058$} \\
    Proposed $\mathcal{R}(V)$           & \pmb{$0.835 \pm 0.068$} & \pmb{$0.468 \pm 0.017$} & \pmb{$0.630 \pm 0.018$} & \underline{$0.634 \pm 0.089$} & \pmb{$0.568 \pm 0.030$} & \pmb{$0.717 \pm 0.052$} & \pmb{$0.640 \pm 0.047$} & \pmb{$0.623 \pm 0.053$} \\
    \bottomrule
    \toprule
    NMI & SEEDS & YaleB & ORL & CHART & USPS-1000 & COIL20 & MNIST-2000 & Semeion \\
    \midrule
    SymNMF                              & $0.400 \pm 0.147$ & $0.522 \pm 0.017$ & $0.789 \pm 0.010$ & $0.710 \pm 0.066$ & $0.529 \pm 0.030$ & $0.689 \pm 0.021$ & $0.523 \pm 0.026$ & $0.553 \pm 0.043$ \\
    PHALS                               & $0.551 \pm 0.143$ & $0.526 \pm 0.012$ & $0.787 \pm 0.007$ & \underline{$0.767 \pm 0.041$} & $0.553 \pm 0.028$ & \underline{$0.743 \pm 0.027$} & $0.562 \pm 0.042$ & $0.585 \pm 0.036$ \\
    $\mathcal{R}_{\text{off-diag}}(V)$  & $0.429 \pm 0.114$ & \pmb{$0.535 \pm 0.011$} & $0.787 \pm 0.006$ & $0.714 \pm 0.048$ & $0.549 \pm 0.038$ & $0.695 \pm 0.024$ & $0.541 \pm 0.040$ & $0.541 \pm 0.035$ \\
    $\mathcal{R}_{\text{log-det}}(V)$   & $0.498 \pm 0.124$ & $0.529 \pm 0.012$ & \underline{$0.793 \pm 0.010$} & $0.725 \pm 0.058$ & $0.552 \pm 0.036$ & $0.725 \pm 0.018$ & $0.556 \pm 0.040$ & $0.580 \pm 0.043$ \\
    $\mathcal{R}(V)$ by MU              & \underline{$0.557 \pm 0.094$} & $0.527 \pm 0.013$ & $0.791 \pm 0.012$ & $0.711 \pm 0.064$ & \underline{$0.560 \pm 0.026$} & $0.739 \pm 0.018$ & \underline{$0.568 \pm 0.038$} & \underline{$0.588 \pm 0.027$} \\
    Proposed $\mathcal{R}(V)$           & \pmb{$0.634 \pm 0.068$} & \underline{$0.532 \pm 0.017$} & \pmb{$0.796 \pm 0.009$} & \pmb{$0.784 \pm 0.033$} & \pmb{$0.593 \pm 0.030$} & \pmb{$0.868 \pm 0.017$} & \pmb{$0.656 \pm 0.029$} & \pmb{$0.637 \pm 0.022$} \\
    \bottomrule
    \end{tabular}
\end{table*}

\begin{figure*}[!t]
    \centering
    \includegraphics[width=\linewidth]{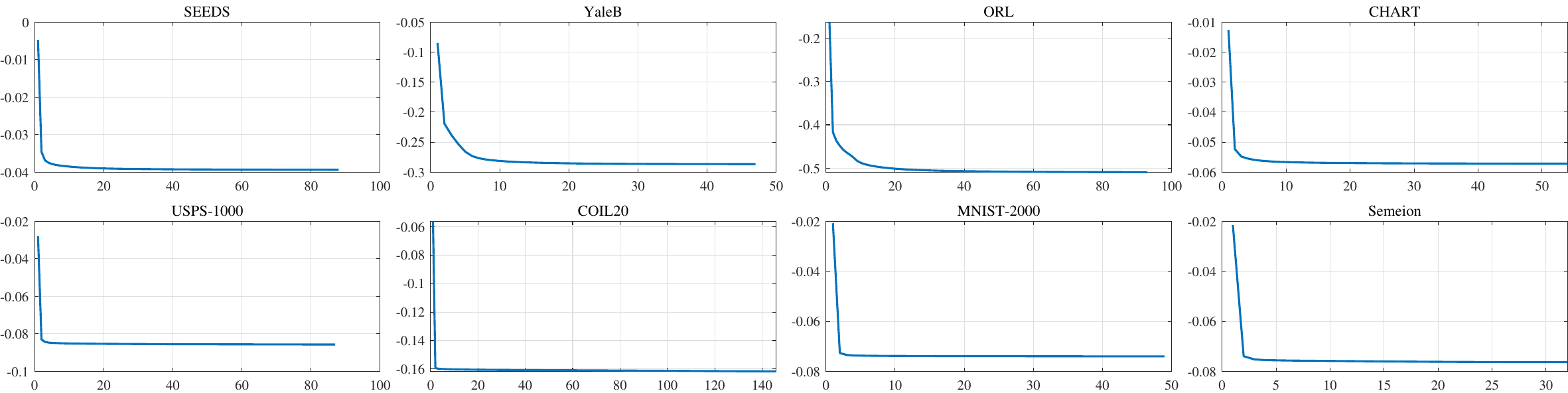}
    \caption{Convergence curves of the proposed model \eqref{eq:model} on eight datasets. For each dataset, the $x$-axis represents the iteration count, and the $y$-axis represents the objective function of \eqref{eq:model}.}
    \label{fig:convergence}
\end{figure*}

The clustering performances of these models are shown in Table \ref{tab:ablation_AOSymNMF}.
It can be seen that orthogonality regularization can significantly improve the clustering performance.
Among the MU-based methods, the proposed $\mathcal{R}(V)$ has a slight advantage compared with $\mathcal{R}_{\text{off-diag}}(V)$ and $\mathcal{R}_{\text{log-det}}(V)$.
Benefiting from the effectiveness of the PAHLS algorithm, the performance of the proposed $\mathcal{R}(V)$ is further improved.
As a result, the proposed $\mathcal{R}(V)$ demonstrates significant advantages among all methods.

\subsection{Convergence Analysis}

The theoretical convergence guarantee is a key advantage of our proposed orthogonality regularization, as analyzed in Theorem \ref{theorem:convergence_loss} and Theorem \ref{theorem:KKT}.
In this subsection, we empirically verify the empirical convergence behavior in Fig. \ref{fig:convergence}, where the hyperparamaters $\mu = \alpha = 0.1, \beta = 10$.
It can be found that the objective function is monotonically decreases, and typically converges within $100$ iterations.

\section{Conclusion} \label{sec:6}
In this paper, we proposed a novel SymNMF model that constructs the similarity graph and dissimilarity graph as weighted $k$-NN graph, and adaptively learns the weights and clustering results simultaneously.
Moreover, a new orthogonality regularization with explicit geometric meaning and good numerical stability is proposed.
The proposed model is solved by an alternative optimization algorithm with theoretical convergence guarantee.
Extensive experimental results demonstrate that the proposed model can achieve excellent clustering performance.

\bibliographystyle{ieeetr}
\bibliography{references}

\end{document}